%% file: iclr2027_conference.tex
\documentclass{article} 
\usepackage{iclr2027_conference,times}

\input{math_commands.tex}

\usepackage{hyperref}
\usepackage{url}
\usepackage{cleveref}
\usepackage{mathtools}
\usepackage[toc,page]{appendix}
\usepackage{titletoc}

\newtheorem{problem}{Problem}
\newtheorem{definition}{Definition}

\newtheorem{lemma}{Lemma}
\newtheorem{assumption}{Assumption}

\newtheorem{proof}{Proof}

\newtheorem{example}{Example}
\newtheorem{theorem}{Theorem}

\usepackage{algorithm}
\usepackage{algorithmic}
\usepackage{array}
\usepackage{booktabs}
\usepackage{xcolor}

\usepackage{tikz}
\usetikzlibrary{positioning,arrows.meta}
\usepackage{amssymb}

\title{Targeted Active Learning for Preference-\\Based Treatment Effects on Multivariate\\ Outcomes}

\author{%
Lola Giordani$^{1,2}$ \quad Mathieu Even$^{1,3}$ \quad Chlo\'e Geoffroy$^{1}$ \quad Jean-Christophe Corvol$^{2}$ \\
\bfseries Rapha\"el Porcher$^{4}$ \quad Federico Pavone$^{1}$ \\[0.6em]
\normalfont
$^{1}$Theremia, Paris \\
$^{2}$Paris Brain Institute (ICM), Sorbonne Universit\'e, Inserm, CNRS, AP-HP \\
$^{3}$Inria, Inserm, Universit\'e de Montpellier, France \\
$^{4}$Universit\'e Paris Cit\'e, Inserm, Centre for Research in Epidemiology and Statistics (CRESS) \\[0.3em]
}

\iclrfinalcopy 
\begin{document}

\maketitle
\lhead{} 

\begin{abstract}
Treatment efficacy is traditionally demonstrated on the basis of a single primary outcome. However, clinical decision-making usually requires consideration of multiple outcomes, balancing expected benefits against potential risks. The relative value assigned to these outcomes varies substantially from one patient to another. Given a \emph{preference rule} over outcome profiles, treatment effects and optimal policies can be defined and estimated. Such a rule is rarely available in practice: it must itself be estimated from pairwise comparisons of outcome profiles, which are costly to collect from clinical experts. We propose an active learning framework that selects which comparisons to query. Standard criteria maximize the information gained on the preference rule itself. We instead target the quantities of interest, and select the query that most reduces uncertainty on the treatment effect and on the optimal policy induced by the learned rule. Under a Gaussian process model of the preference rule, we derive a closed-form approximation of this criterion. On semi-synthetic data built from a Parkinson's disease cohort with 13 clinical outcomes, our criterion achieves lower treatment effect estimation error and lower policy regret than existing criteria at equal query budget.
\end{abstract}

\section{Introduction}
 
A large literature \citep{rubin1974estimating, angrist1995identification,nie2021quasi} answers  whether a treatment increases or decreases one quantity of interest, such as the outcome of a disease.
However many clinical decisions involve several outcomes at once: a treatment may
improve one symptom while worsening another, or bring a clinical benefit at the
cost of a side effect. 
While one can estimate an effect for each outcome on its own, the problem becomes more difficult when several separate and competing effects have to be turned into a single treatment decision.
Standard practice avoids this by reducing the outcome to a scalar before the causal question is even posed, either by picking a single primary endpoint or by combining outcomes into a composite score \citep{freemantle_Composite_2003}. Such a score settles the trade-offs in advance for the whole population, while the relative
importance of each outcome varies across patients and clinical contexts.
Parkinson's disease illustrates this well, by combining motor symptoms, such as tremor, rigidity and akinesia, with non-motor symptoms such as sleep disorders, depression, cognitive decline and dysautonomia \citep{kalia_Parkinsons_2015}. It is usually treated with dopamine replacement therapy, which improves motor symptoms but comes with its own costs: motor complications such as fluctuations and dyskinesia, non-motor complications such as impulse control disorders and hallucinations, and a worsening of existing non-motor symptoms such as dysautonomia. Treatments act differently on each of these dimensions, and no standard scale says how to weigh them against each other.
Clinicians naturally balance them when selecting a therapy but classical treatment effect estimation methods don't. Parkinson's disease is used as a running example throughout the paper.

Instead of measuring how much a treatment moves some fixed score, \citet{even_Rethinking_2026} ask which complete outcome profile a clinician would judge to be better overall. They assume access to a known
preference rule, which quantifies for any two outcome profiles how likely it is that the first would be judged preferable to the second.
The preference rule captures both that a single clinician is not perfectly consistent and that different clinicians may disagree. \citet{parnas_Preferencebased_2026b} then define \emph{conditional preference treatment effect} (CPTE) as the probability that, among patients sharing the same characteristics, the outcome experienced under treatment would be preferred to
the outcome experienced under control. This yields a treatment effect for outcomes with several dimensions, or with categories that can be ranked but not subtracted.

\subsection{Contributions and outline of the paper}

The preference rule is known in some cases, for instance for hierarchical outcomes \citep{pocock_win_2012a}, but in general the trade-offs between symptoms, side effects and quality of life have to be elicited. Clinicians or other experts are shown pairs of outcome profiles and asked which one they prefer, and the rule is learned from these labeled pairs.
Each comparison takes clinician time, so only a small and somewhat noisy set of judgments can be collected.
Two difficulties follow. First, the uncertainty in the learned preference rule propagates to the CPTE, which becomes a random quantity rather than a fixed number.
Second, the choice of which comparisons to ask becomes crucial. Existing selection methods target either the preference model itself or the answers to future comparisons, but not the causal estimand, which integrates preferences over the distributions of potential outcomes.
In this paper, we address both difficulties with, to our knowledge, the first active learning method for preference elicitation that targets a causal estimand. We treat the CPTE as a random quantity under the posterior of the preference model, and we select the comparisons that are expected to be most informative about it. Clinician time is then spent on the trade-offs that decide which treatment is better for the target patients, rather than on learning the preference rule everywhere.
After presenting the background (\Cref{sec:background}) and formalizing the active learning problem (\Cref{sec:active_learning_problem}), we make the following contributions:

\begin{itemize}
\itemsep2pt \parskip0pt
\item We formulate the CPTE induced by a probabilistic preference model as a
posterior random variable $\Psi_x$, which captures the uncertainty that limited preference data leaves in the estimand (Section~\ref{sec:model}). We then linearise this estimand around the posterior mean utility to
obtain a tractable, Gaussian characterisation of its law, with an error
bound computable at every round from the current posterior
(Section~\ref{sec:linearization}).
\item We introduce the \textit{CPTE -- Expected Information Gain} (\textsc{CPTE-EIG}), an acquisition criterion that scores
candidate comparisons by their expected information gain about the causal
estimand in a target population. Under the linearised CPTE, we derive a computationally efficient expression for it,
in closed form up to a one-dimensional quadrature (Section~\ref{sec:closed-form}).
\item We evaluate the framework in controlled experiments and on a
semi-synthetic Parkinson's disease cohort with thirteen outcomes, and show
that at equal query budget it reduces both CPTE estimation error and the
regret of the resulting treatment policy (Section~\ref{sec:experiments}).
\end{itemize}

\begin{figure}[h!]
\centering
\begin{tikzpicture}[
  x=1mm, y=1mm, font=\scriptsize,
  box/.style={draw=#1, fill=#1!7, rounded corners=3pt, line width=0.7pt},
  txt/.style={align=center, text width=#1},
  arr/.style={-{Stealth[length=2mm]}, line width=0.8pt, draw=black!65},
  ttl/.style={font=\footnotesize\bfseries, align=center},
]
\definecolor{wfgray}{HTML}{6b6b6b}
\definecolor{wfblue}{HTML}{2a78d6}
\definecolor{wfgreen}{HTML}{0f7a55}
\definecolor{wfgold}{HTML}{c98a00}
\definecolor{wftremor}{HTML}{5598e7}
\definecolor{wfcog}{HTML}{e07b39}

\draw[box=wfgray] (0,0) rectangle (30,24);
\node[ttl] at (15,20.5) {Which treatment?};
\node[font=\tiny] at (15,16.6) {\textcolor{wftremor}{$\blacksquare$} tremor\quad
\textcolor{wfcog}{$\blacksquare$} cognition};
\foreach \x/\a/\b in {8.5/7/2.5, 17.5/2.5/7} {
\fill[wftremor] (\x,3.5) rectangle ++(2.2,\a);
\fill[wfcog] (\x+2.6,3.5) rectangle ++(2.2,\b);
}
\draw[black!50] (6.5,3.5) -- (23.5,3.5);
\foreach \x/\y in {8.5/y_0, 17.5/y_1} {
\draw[black!60, line width=0.5pt] (\x-0.2,11) -- ++(0,0.6) -- ++(5.2,0) -- ++(0,-0.6);
\node[font=\scriptsize] at (\x+2.4,13.1) {$\y$};
}
\node[font=\tiny] at (10.9,1.8) {$A=0$};
\node[font=\tiny] at (19.9,1.8) {$A=1$};

\draw[box=wfgray] (34,0) rectangle (57,24);
\node[ttl, txt=22mm] at (45.5,19) {Observed\\ cohorts};
\node[txt=22mm] at (45.5,8.5) {Outcomes under\\ each treatment\\ $(y_0,y_1)$};

\draw[box=wfblue, dashed, fill=wfblue!5] (61,0) rectangle (113,24);
\node[ttl, text=wfblue] at (87,20.8) {CPTE-EIG};
\draw[box=wfblue] (63,3) rectangle (84,17.5);
\node[txt=20.5mm] at (73.5,10.25) {Query comparison\\ ($y_0,y_1$) the most\\ informative on CPTE ($\Psi_x$)};
\draw[box=wfgreen] (90,3) rectangle (111,17.5);
\node[txt=20.5mm] at (100.5,10.25) {\textbf{Ask clinician}\\ Is outcome $y_1$\\ better than $y_0$?};
\draw[arr] (84,12.5) -- (90,12.5);
\draw[arr] (90,8) -- (84,8);

\draw[box=wfgold] (116.6,0) rectangle (138.6,24);
\node[ttl] at (127.6,19.5) {Treatment \\ decision};
\node[txt=21.5mm] at (127.6,9.5) {Treated outcome\\ preferred?\\[2pt] Treat with $A=1$ if $\Psi_x>\frac12$};

\draw[arr] (30,12) -- (34,12);
\draw[arr] (57,12) -- (61,12);
\draw[arr] (113,12) -- (116.6,12);
\end{tikzpicture}
\caption{Overall workflow presented in the paper: CPTE-EIG asks the clinician only the comparisons that matter for the treatment decision.}
\label{fig:workflow}
\end{figure}

\subsection{Related works}

\paragraph{Comparing treatments on several outcomes.}
Weighing benefits and risks across several outcomes at once is a recurrent problem in clinical research and drug regulation. Multi-criteria decision analysis offers a structured way to combine several outcomes into a single recommendation \citep{gongora-salazar_Use_2023}. Composite endpoints do this by combining outcomes into one weighted sum (the composite outcome), so everything sits on the same scale \citep{freemantle_Composite_2003}. Composite outcomes can also be learned on data in a covariate-dependent way \citep{butler2018incorporating}. Generalized pairwise comparisons
(GPC) \citep{buyse_Generalized_2010} and the win ratio \citep{pocock_win_2012a} avoid this by ranking outcomes by priority and comparing them hierarchically, one pair of patients at a time, moving to a lower-priority outcome only when higher-priority ones are tied. Either way, a clinician or any specialist, has to fix the weights or the ranking in advance, and the result is usually a single population-level number rather than a causal effect that varies with a patient's characteristics. \citet{even_Rethinking_2026} show this matters for the win ratio: the historical estimand can reverse treatment recommendations under heterogeneous populations, and they propose an individual-level fix, though the ranking itself stays fixed. \citet{petit2025optimal} and \citet{parnas_Preferencebased_2026b} push this individual-level view from estimation to decision-making, defining an optimal individualized treatment rule within the same prioritized-outcome GPC framework.

\paragraph{Learning preferences from comparisons.}
A standard way to model pairwise preferences is to assume each option has
a hidden value, and that people prefer whichever option is worth more
\citep{bradley_Rank_1952}. \citet{chu2005preference} turned this into a
Bayesian model: a Gaussian process describes the hidden value function, and
each comparison updates a posterior over it, giving not just which item is
preferred but how confident that ordering is. Later work extends this in
several directions: handling clinicians who may disagree with each other
\citep{houlsby_Collaborative_2012}, noisy or self-contradictory judgments
\citep{chau_Learning_2022}, and answers that depend on the context in which
a comparison is asked \citep{bengs_Stochastic_2022}. In every case, the
object being learned is the preference function itself.

\paragraph{Choosing which comparisons to ask.}
Since each comparison takes clinician time, the comparisons should be
chosen carefully rather than at random. Much of the work on active
preference learning aims at the best option.
\citet{eric_Active_2007} ask as few questions as possible to find the item
a person values most, and
preferential Bayesian optimisation \citep{gonzalez_Preferential_2017}
searches for the maximiser of a latent function from pairwise comparisons.
In these works, the goal
is the best option or the preference function itself.
Our method belongs to Bayesian experimental design
\citep{lindley1956measure,chaloner1995bayesian,rainforth2024modern}, which
selects the query with the largest expected information gain about a
quantity of interest. Existing criteria differ in the quantity they target.
Bayesian active learning by disagreement (BALD)
\citep{houlsby_Bayesian_2011b}, a widely used criterion for preference
learning, targets the model parameters.
Expected predictive information gain (EPIG)
\citep{smith_PredictionOriented_2023b} instead targets the predictions on
inputs drawn from a chosen target distribution.
For treatment effects, Causal-BALD \citep{jesson_CausalBALD_2021} and
Causal-EPIG \citep{gao_CausalEPIG_2025a} choose the individuals whose
outcomes are measured so as to reduce uncertainty respectively in the conditional average
treatment effect estimator. Our setting
differs from both in two ways. First, we collect noisy
pairwise judgments rather than outcome measurements, and use them to learn
a preference function rather than an outcome model. Second, we target
neither the preference function nor a treatment effect on a scalar outcome,
but a treatment effect that is defined only once the preference function is
combined with the distribution of the multivariate potential outcomes.

\section{Background}
\label{sec:background}

Throughout the paper we use two different sources of data answering different questions: \textit{(i)} a cohort of patients tells us how treatments change outcomes, while \textit{(ii)} a set of pairwise comparisons, in which clinicians state which of two outcome profiles they prefer, tells us how outcomes should be compared. The first is fixed and available in advance, and is subject to the assumptions we describe next in \Cref{sec:data}. The second is collected sequentially by our method, is indexed independently of the cohort, and is introduced in Section~\ref{sec:active_learning_problem}.

\subsection{Notations and assumptions}
\label{sec:data}

\begin{table}[t]
\begin{center}
\begin{small}
\caption{The quantities of the paper instantiated for Parkinson's disease (PD),
chosen in accordance with specialists. MDS-UPDRS
\citep{goetz_Movement_2008} is the standard clinical scale for rating PD.}
\label{tab:parkinson}
\setlength{\tabcolsep}{4pt}
\begin{tabular}{@{}>{\raggedright\arraybackslash}p{0.16\columnwidth}>{\raggedright\arraybackslash}p{0.8\columnwidth}@{}}
\toprule
Quantity & Parkinson's disease instantiation \\
\midrule
$X$
& Full covariate. Includes comorbidities, PD subtype/staging, cognitive
and psychiatric scales, anxiety, fatigue,
biomarkers/genetics, ethnicity, race, education, family history, BMI. \\
\midrule
$\tilde{X} = \gamma(X)$
& sex, age group ($<50$ to $80{+}$, $5$-year bins), disease duration \\
\midrule
$Y \in \mathcal{Y} \subseteq \mathbb{R}^{13}$
& $13$ MDS-UPDRS items. \emph{Non-motor} ($7$): cognition, hallucinations,
mood, apathy, dopamine dysregulation, sleep, orthostatic symptoms.
\emph{Motor} ($4$): self-care, tremor, walking/balance, freezing.
\emph{Motor complications} ($2$): dyskinesia and fluctuation impact \\
\midrule
$A$
& $0$ (levodopa) or $1$ (dopamine agonists) \\ 
\bottomrule
\end{tabular}
\end{small}
\end{center}

\end{table}

Each participant in the cohort is described by covariates $X \in \mathcal{X}$, a binary treatment $A \in \{0,1\}$ and an outcome $Y \in \mathcal{Y}$, with potential outcomes $Y(0), Y(1)$. The outcome is multivariate, $\mathcal{Y} \subseteq \mathbb{R}^p$, each coordinate being a bounded ordinal score for one clinical dimension. These coordinates are not required to be on a common scale. Table~\ref{tab:parkinson} specifies the quantities we introduce for Parkinson's disease as a concrete example.
We work under the following standard causal assumptions \citep{rosenbaum_Central_1983,rubin_Randomization_1980}.

\begin{assumption}
[Causal Assumptions]
\label{ass:identification}
\textit{(Consistency \& No interference)} $Y = Y(A)$;
\textit{(Unconfoundedness)} $\bigl(Y(0), Y(1)\bigr) \perp A \mid X$;
\textit{(Positivity)} $0 < \mathbb{P}(A = 1 \mid X = x) < 1$ for
 all $x$.
\end{assumption}

Under \Cref{ass:identification}, the conditional laws of $Y(0)$ and $Y(1)$ given $X=x$
are identifiable from observed data \citep{hernan_Causal_2025}. We write
$\hat{\mu}_a(\cdot \mid x)$, $a \in \{0,1\}$ for their estimates.
Let
\begin{equation}
\label{eq:nu}
\nu_x(\mathrm{d}z) = \hat{\mu}_1(\mathrm{d}y_1 \mid x) \otimes \hat{\mu}_0(\mathrm{d}y_0 \mid x)\,,\qquad z=(y_1,y_0)\,,
\end{equation}
be the joint law of an independent treated--control pair sharing same covariates $x$. Each
$\hat{\mu}_a$ is a joint distribution over the full outcome vector:
replacing it with a product of coordinate marginals would discard
within-arm dependence between clinical dimensions and change the law
of $z$.

\subsection{Comparing outcome profiles}
\label{sec:preferences}

Informing about how two outcome profiles $y_0$ and $y_1$ compare is a clinical judgment, elicited directly from clinicians, who are shown the two profiles without the treatment that produced them.
A clinical pairwise comparison may require context knowledge about the patient for whom the comparison is made, such as a set of features.
Showing a clinician the full covariate $x$ of a patient together with the compared outcomes is neither feasible nor right in principle, since some components of $x$ affect how well a treatment works without affecting how outcomes should be traded off against each other. We therefore elicit preferences using only a fixed, clinically justified summary of the patient, keeping $x$ for identification alone.
Let
$\gamma: \mathcal{X} \to \tilde{\mathcal{X}}$ be a fixed, known map, and write $\tilde x = \gamma(x) \in \tilde{\mathcal X}$
for the resulting \emph{query covariate}.
Table~\ref{tab:parkinson} gives $\gamma$ for Parkinson's disease as a concrete example: tolerance for a side effect legitimately changes with age, whereas ethnicity may help predict how well a treatment works, and so remains in $x$, but has no place in deciding whether tremor is preferable to cognitive impairment.

Following \citet{parnas_Preferencebased_2026b}, we summarise how a
clinician ranks two outcome profiles by a \emph{preference rule} $w$,
and extend it to query covariates.
In \citet{parnas_Preferencebased_2026b}, $w$ accounts
both for a single clinician not being deterministic and for disagreement
between clinicians. In this work, we model a single rule describing an average
clinician, and do not model individual raters separately.

\begin{definition}[Preference rule]
    \label{def:preference_rule}
    The \emph{preference rule} $w$ is defined as the probability taken over clinicians' judgments that $y_1$ is preferred to $y_0$ when both are shown
at query covariate $\tilde x$: for $y_1, y_0 \in \mathcal{Y}$ and
$\tilde{x} \in \tilde{\mathcal{X}}$,
$w(y_1 \mid y_0, \tilde{x}) = \mathbb{P}(y_1 \succ y_0 \mid \tilde{x})$. We assume the no ties assumption: $w(y_1 \mid y_0, \tilde{x}) + w(y_0 \mid y_1, \tilde{x}) = 1$, for every $y_0,y_1,\tilde x$.
\end{definition}

\subsection{The conditional preference treatment effect}
\label{sec:cpte}

With $w$ fixed, we can now state the causal target of the paper.

\begin{definition}[CPTE and $\Psi_x$]
\label{def:cpte}
For a fixed rule $w$, the conditional preference treatment effect (CPTE)
\citep{parnas_Preferencebased_2026b} at covariate $x \in \mathcal{X}$ is:
\begin{equation}
\label{eq:cpte}
\mathrm{CPTE}_w(x) = \mathbb{E}\bigl[\, w\bigl(Y_i(1) \mid Y_j(0),
\gamma(x)\bigr) \;\big|\; X_i = X_j = x \,\bigr],
\end{equation}
with $i \neq j$.
The expectation is over two independent copies of the
potential outcomes at $x$.

Under Assumption~\ref{ass:identification}, the CPTE is identified from observed data as $\mathbb{E}[w(Y_i \mid Y_j, \gamma(x)) \mid X_i = X_j = x,\, A_i = 1,\, A_j = 0]$ \citep{parnas_Preferencebased_2026b}, or equivalently as the integral of $w(\cdot \mid \cdot, \gamma(x))$ against $\nu_x$, previously defined in \Cref{eq:nu}:
\begin{equation}
\label{eq:psi}
\Psi_x \;=\; \mathrm{CPTE}_w(x) \;=\; \int_{\mathcal{Y}^2} w(y_1 \mid y_0,
\gamma(x)) \, \nu_x(dz).
\end{equation}
\end{definition}

$\Psi_x$ is the probability that a treated outcome is preferred to a
control outcome among patients with covariate $x$, and $\Psi_x > 1/2$
means that treatment is preferred at $x$. Taking $\mathcal{Y} \subseteq
\mathbb{R}$ and $w(y_1 \mid y_0, \tilde{x}) = y_1 - y_0$ recovers the
CATE, so the CPTE generalises it to
outcomes that can be compared but not substracted.
Under the no-ties assumption, the expected preference for control at $x$ is
$1 - \Psi_x$. The policy maximising expected preference is therefore $\pi^\star(x) = \mathbf{1}\{\Psi_x > 1/2\}$, and any policy $\pi$ incurs regret $\mathbb{E}_X\bigl[\, \lvert 2\Psi_X - 1 \rvert \cdot \mathbf{1}\{\pi(X) \neq \pi^\star(X)\} \,\bigr]$: decision errors are weighted by how clear-cut the decision is at $X$.

\section{Active learning problem}
\label{sec:active_learning_problem}

In our use case, the rule $w$ is not given: it has to be elicited from clinicians, one pairwise comparison at a time, asked at the query
covariate $\tilde{x}$ rather than at the full covariate $x$.

\begin{definition}[Query]
\label{def:query}
A query is a triplet $c = (\tilde{x}, a, b)$, in which a clinician
compares outcome profiles $a, b \in \mathcal{Y}$ presented at query
covariate $\tilde{x}$, and returns a binary answer $V_c \in \{0, 1\}$,
with $V_c = 1$ if $a$ is preferred. 
\end{definition}

Comparisons are collected one at a time: at round $t$ the method selects a
query $c_{t+1}$ from a candidate pool $\mathcal{C}_t$, observes a single
answer $v_{t+1}$, and updates its beliefs about $w$. The pool is finite
and built from outcome profiles observed in the cohort, so that the
profiles shown to a clinician remain clinically realistic. The
procedure stops after a budget of $T$ queries, reflecting available
clinician time, and we write $\mathcal{D}_t = \{(c_k, v_k)\}_{k=1}^t$ for
the comparisons observed after $t$ rounds.
We write $p^\star$ for a distribution over full covariates $x^\star$, describing the population on which treatment decisions will be made. In practice, $p^\star$ is the empirical distribution of the target
cohort. 
This yields the following problem, which our paper addresses.

\begin{problem}
\label{prob:main}
Given a budget of $T$ comparisons and the estimated potential-outcome laws
$\hat{\mu}_a$ of Section~\ref{sec:data}, choose queries $c_1, \dots, c_T
\in \mathcal{C}_t$ so that the resulting CPTE estimate and the policy it
induces are as accurate as possible on $p^\star$.
\end{problem}

Active learning tackles the following question: which query should we
ask next? 
As in Bayesian experimental design, we choose the query with the largest
expected information gain about a quantity of interest. Here, this
quantity is $\Psi_x$, the proportion of patients with covariate $x$ for
whom treatment is preferred, which alone determines the optimal policy
$\pi^\star$ and its regret.  

\begin{definition}[CPTE-EIG]
\label{def:cpte-eig}
For a target covariate $x$ and a candidate query $c$, the expected
information gain (EIG) of $c$ about $\Psi_x$ is the following mutual information
\citep{cover_Elements_1991}: $\mathrm{EIG}_{\Psi(x)}(c) = \mathbb{I}(\Psi_x; V_c \mid \mathcal{D}_t, c)$.
The CPTE-EIG score of $c$ averages this over the target population,
and the next query maximises it:
\begin{equation}
\label{eq:cpte-eig}
\mathrm{CPTE\text{-}EIG}(c) = \mathbb{E}_{x^\star \sim p^\star}\bigl[\, \mathrm{EIG}_{\Psi(x^\star)}(c) \,\bigr], \quad c_{t+1} \in \arg\max_{c \in \mathcal{C}_t} \mathrm{CPTE\text{-}EIG}(c).
\end{equation}
\end{definition}

Equation~\ref{eq:cpte-eig} is well defined for any probabilistic model
of $w$, but in general does not admit a closed form or cannot be computed efficiently: evaluating it would require sampling $w$ from its posterior and integrating
against $\nu_{x^\star}$ for every candidate query and target
covariate at every round, by Monte Carlo
\citep{robert_Monte_2011}. We next resolve this difficulty by proposing a computationally efficient approach.

\section{Efficient CPTE-EIG computation under Gaussian processes}
\label{sec:model}

\textbf{Utility contrast models.}
While the active learning criterion CPTE-EIG is well-defined for any
probabilistic model of $w$, we specify a model under which we can derive closed form updates. 
Following the standard Bradley-Terry model \citep{bradley_Rank_1952}, we assume preferences
come from a latent utility that assigns a single score per outcome profile. One profile is preferred to another whenever its utility is higher. We thus define
\label{def:utility}
$u: \tilde{\mathcal{X}} \times \mathcal{Y} \to \mathbb{R}$ and for $r=(r_1,r_0)\in (\tilde{\mathcal{X}} \times \mathcal{Y})^2$, we define the \emph{utility contrast} as:
\begin{equation}
\label{eq:contrast}
D_r u = u(r_1) - u(r_0).
\end{equation}
A query $c=(\tilde x,a,b)$ corresponds to $r=((\tilde x,a),(\tilde
x,b))$, which gives the notation $D_c u$. A treated--control pair $z=(y_1,y_0)$ at full
covariate $x$ (as in \Cref{eq:nu}) corresponds to
$r=((\tilde x,y_1),(\tilde x,y_0))$, giving
$
\label{eq:contrast-xz}
D_{x,z} u = u(\tilde x, y_1) - u(\tilde x, y_0),
$
where the index carries the full covariate $x$, but $u$ itself is
evaluated at $\tilde x=\gamma(x)$.
The clinician's noisy answer and the CPTE both compare an outcome pair
through the same probit link \citep{thurstone_law_1927}, and we tie
them to a single noise scale $\lambda$. Formally,
\begin{align}
V_c \mid u, c \sim \mathrm{Bernoulli}\left(\left(\frac{D_c u}{\lambda}\right)\right), \qquad
w_u(x,z) = \Phi\left(\frac{D_{x,z} u}{\lambda}\right), \label{eq:likelihood_w-u}
\end{align}
where $\Phi$ is the standard normal cumulative distribution function. A large $\lambda$ drives both towards a coin flip. A small $\lambda$ makes each nearly deterministic in the sign of its contrast. Because $\Phi$ is antisymmetric,
$\Phi(t)+\Phi(-t)=1$, swapping $y_1$ and $y_0$ in \Cref{eq:likelihood_w-u} flips
the sign of the contrast and gives $w_u(\tilde x,(y_0,y_1))=1-w_u(\tilde
x,(y_1,y_0))$: the no-ties condition holds for every fixed $u$.

\textbf{Utililty prior.}
We treat $u$ as random and place a Gaussian process prior on it
\citep{rasmussen_Gaussian_2005a} with marginal variance $\sigma^2$, and
write $U$ for the resulting random function, reserving $u$ for a fixed
realisation. Since \Cref{eq:likelihood_w-u} is not Gaussian, the exact
posterior is unavailable in closed form. As is standard for Gaussian
process preference models \citep{chu2005preference}, we use a Laplace
approximation \citep{williams_Bayesian_1998} which gives
$U\mid\mathcal D_t\approx q_t=\mathrm{GP}(m_t,k_t)$, and take all
probabilities, expectations and covariances involving $U$ under $q_t$
from here on. The contrast operator is linear, so every contrast
$D_rU$ is Gaussian under $q_t$. We write $m_t^\Delta(r)$ for its posterior
mean and $k_t^\Delta(r,r')$ for the posterior covariance between
$D_rU$ and $D_{r'}U$. 

\paragraph{Induced law of the estimand.}
Substituting \Cref{eq:likelihood_w-u} into Definition~\ref{def:cpte}, the estimand
becomes a functional of the utility,
\begin{equation}
\label{eq:psi-functional}
T_x(u) = \int_{\mathcal{Y}^2} \Phi\Bigl(\frac{D_{x,z} u}{
\lambda}\Bigr) \, \nu_x(dz), \qquad \Psi_x = T_x(u).
\end{equation}
This holds for a fixed $u$. Since $u$ is now a draw from the posterior
$q_t$ of Section~\ref{sec:model}, $\Psi_x$ is random too. Let $U \sim q_t$, 
the \emph{induced estimand} at $x \in \mathcal{X}$ after $t$ queries is the random variable $\Psi_x = T_x(U) \in [0, 1]$, whose law we denote $\Pi_{t,x}$,
and it represents our
belief about $\Psi_x$ after observing $\mathcal{D}_t$.
Because $T_x$ is non-linear, $\Pi_{t,x}$ is not Gaussian and its density
has no closed form. Its mean also differs from the plug-in value,
$\mathbb{E}_{q_t}[\Psi_x] \neq T_x(m_t)$ in general.

\subsection{Linearisation and error control}
\label{sec:linearization}

Definition~\ref{def:cpte-eig} scores a candidate query $c$ by
$\mathbb{I}(\Psi_x; V_c \mid \mathcal{D}_t, c)$. Since $V_c$ depends on
$U$ only through the contrast $D_c U$, this score is determined by the
joint law of $(\Psi_x, D_c U)$. If this pair were Gaussian, the criterion
would reduce to a one-dimensional integral. We therefore approximate
$\Psi_x$ by an affine function of $U$, which makes the pair jointly
Gaussian, and we bound the error this introduces.
Throughout we assume the
posterior kernel $k_t$ is bounded by $K_t$ on its diagonal
(Assumption~\ref{ass:bounded-kernel}, Appendix~\ref{app:integrability}),
which holds for standard kernels such as RBF and Mat\'ern
\citep{rasmussen_Gaussian_2005a} and bounds the variance of every
posterior contrast by $4K_t$. Consequently, $z \mapsto D_{x,z}h$
lies in $L^2(\nu_x) \subset L^1(\nu_x)$ for $q_t$-almost every
$h = U - m_t$ (Lemma~\ref{lem:regularity},
Appendix~\ref{app:integrability}), which allows us to differentiate under
the integral sign below \citep{folland_Real_2013}.

\begin{theorem}[Linearised estimand and its error]
\label{thm:linearisation}
Under Assumption~\ref{ass:bounded-kernel}, write $h = U - m_t$ and, for
$x \in \mathcal X$ and $z \in \mathcal Y^2$, the sensitivity weight
\begin{equation}
\label{eq:sensitivity}
\alpha_t(x,z) = \frac{1}{\lambda} \varphi\Bigl(\frac{m_t^\Delta(x,z)}{\lambda}\Bigr), \qquad \varphi = \Phi',
\end{equation}
which is largest on pairs the model finds hardest to rank. 
Then, the map $\epsilon
\mapsto T_x(m_t+\epsilon h)$ is differentiable at $0$, and its
first-order expansion at $\epsilon=1$ gives
\begin{equation}
\label{eq:linearisation}
\Psi_x \approx T_x(m_t) + \int \alpha_t(x,z)\bigl\{D_{x,z}U - m_t^\Delta(x,z)\bigr\}\,\nu_x(dz),
\end{equation}
which is affine in $U$, hence jointly Gaussian with any contrast $D_cU$. The
remainder $R_x(h)$ neglected in \Cref{eq:linearisation} satisfies
$\mathbb E_{q_t}\bigl[|R_x(h)|\bigr] \;\le\; \frac{\varphi(1)}{2\lambda^2}\int k_t^\Delta\bigl((x,z),(x,z)\bigr)\,\nu_x(dz) \;=:\; B_t(x)$.
\end{theorem}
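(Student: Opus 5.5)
The proof is a pointwise second-order Taylor expansion of $\Phi$ inside the integral, followed by an expectation under $q_t$. Fix $x$ and write $g(\epsilon) = T_x(m_t+\epsilon h) = \int \Phi\bigl((m_t^\Delta(x,z) + \epsilon D_{x,z}h)/\lambda\bigr)\,\nu_x(dz)$. This uses linearity of the contrast operator, $D_{x,z}(m_t+\epsilon h) = m_t^\Delta(x,z)+\epsilon D_{x,z}h$.

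\textbf{Step 1: differentiability at $0$.} The $\epsilon$-derivative of the integrand is $\lambda^{-1}\varphi(\cdot)\,D_{x,z}h$. It is dominated by $\lambda^{-1}\varphi(0)\,|D_{x,z}h|$, uniformly in $\epsilon$. By Lemma~\ref{lem:regularity}, $z\mapsto D_{x,z}h$ lies in $L^1(\nu_x)$ for $q_t$-almost every $h$. Dominated convergence, through the standard differentiation-under-the-integral theorem, then gives
\[
g'(0)=\int \alpha_t(x,z)\,D_{x,z}h\,\nu_x(dz).
\]
Since $D_{x,z}h = D_{x,z}U - m_t^\Delta(x,z)$, the expansion $g(1)\approx g(0)+g'(0)$ is exactly \Cref{eq:linearisation}. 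The right-hand side is a continuous linear functional of $U$ plus a constant. It is therefore jointly Gaussian with any $D_cU$ under the Gaussian process $q_t$; to justify this, take it as an $L^2$ limit of finite linear combinations of GP evaluations, noting that $L^2$ limits of jointly Gaussian families remain Gaussian.

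\textbf{Step 2: pointwise remainder.} Set $R_x(h)=g(1)-g(0)-g'(0)$. For each $z$, Taylor's theorem with Lagrange remainder applied to $s\mapsto\Phi(s/\lambda)$ at $s_0=m_t^\Delta(x,z)$ with increment $\delta=D_{x,z}h$ gives a remainder equal to $\tfrac12\lambda^{-2}\varphi'(\xi/\lambda)\,\delta^2$ for some intermediate $\xi$. We have $\varphi'(t)=-t\varphi(t)$, and $|t\varphi(t)|$ is maximised at $t=\pm1$ with value $\varphi(1)$. Hence
\[
|R_x(h)|\le \frac{\varphi(1)}{2\lambda^2}\int (D_{x,z}h)^2\,\nu_x(dz).
\]
This integral is finite almost surely because $D_{x,z}h\in L^2(\nu_x)$.

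\textbf{Step 3: expectation.} Take $\mathbb E_{q_t}$ and swap the expectation with the $\nu_x$-integral by Tonelli, since the integrand is nonnegative. Under $q_t$, $D_{x,z}h$ is a centred Gaussian with variance $k_t^\Delta((x,z),(x,z))$, so $\mathbb E_{q_t}[(D_{x,z}h)^2]=k_t^\Delta((x,z),(x,z))$. This yields $B_t(x)$. The bound is finite because the contrast variance is at most $4K_t$ by Assumption~\ref{ass:bounded-kernel}.

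\textbf{Main obstacle.} The delicate part is measurability rather than the bound. Tonelli requires the map $(h,z)\mapsto D_{x,z}h$ to be jointly measurable. The Gaussianity claim requires the integral functional to be well defined as a random variable. I would handle both by working with a measurable (e.g.\ separable) version of the GP, which is what Lemma~\ref{lem:regularity} presumably provides. The constant $\varphi(1)=\sup_t|\varphi'(t)|$ is the only sharp calculus fact needed.
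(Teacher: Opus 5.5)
Your proposal is correct. Steps 1 and 3 coincide with the paper's proof: the same dominating function $|D_{x,z}h|/(\lambda\sqrt{2\pi})$ via Lemma~\ref{lem:regularity}, the same Tonelli exchange, and the same identification $\mathbb E_{q_t}[(D_{x,z}h)^2]=k_t^\Delta((x,z),(x,z))$.

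The difference is in how you handle the remainder. The paper differentiates under the integral sign a second time. It re-verifies conditions (i)--(iii) with the dominating function $\varphi(1)(D_{x,z}h)^2/\lambda^2$ to show that $g$ is twice differentiable on $[0,1]$ with $|g''(\epsilon)|\le\frac{\varphi(1)}{\lambda^2}\int(D_{x,z}h)^2\,\nu_x(dz)$. It then applies Taylor's theorem with integral remainder to the scalar function $g$, giving $R_x(h)=\int_0^1(1-\epsilon)\,g''(\epsilon)\,d\epsilon$. You instead apply the Lagrange form pointwise in $z$ to $s\mapsto\Phi(s/\lambda)$ and integrate, using only linearity of the $\nu_x$-integral and the formula for $g'(0)$ from Step~1.

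Your route is more elementary in three ways:
\begin{itemize}
\item It needs no second interchange of derivative and integral and no regularity of $g'$ along the segment. The inequality holds for $T_x(U)-T_x(m_t)-\int\alpha_t(x,z)\,D_{x,z}h\,\nu_x(dz)$ irrespective of differentiability.
\item The intermediate point $\xi$ never has to depend measurably on $z$, since only the pointwise inequality is integrated.
\item The $L^2(\nu_x)$ property of Lemma~\ref{lem:regularity} is needed only to make the right-hand side finite. The paper needs it as the domination hypothesis for $g''$.
\end{itemize}

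The paper's route gives more: $g''$ explicitly along the whole path. That fits its Gateaux-derivative framing and extends to higher-order expansions. Both routes reach the same constant $\varphi(1)/(2\lambda^2)$.

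Two of your remarks are, if anything, more careful than the paper. Your $L^2(q_t)$-limit justification of joint Gaussianity fills in what the paper only asserts. Your point that Tonelli needs a jointly measurable version of the process is also apt, since the paper infers joint measurability directly from measurability of $k_t$.
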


The proof is in Appendix~\ref{app:linearisation}. From here on, we take
\Cref{eq:linearisation} as the definition of $\Psi_x$: a $\nu_x$-average
of utility contrasts, each weighted by $\alpha_t(x,z)$, that is, by how
much the pair can still move the estimand. The joint law of
$(\Psi_x, D_cU)$ is then
Gaussian and known in closed form, and Section~\ref{sec:closed-form} uses
it to compute CPTE-EIG. Up to the factor $\varphi(1)/(2\lambda^2)$, the
bound $B_t(x)$ is the $\nu_x$-average of the posterior variances of these
contrasts. It is small when the model is certain about the contrasts on
the target pairs, and loose otherwise. Since it depends only on the
current posterior, it can be computed at every round.

\subsection{Closed-form CPTE-EIG}
\label{sec:closed-form}

We now turn the joint law of $(\Psi_x, D_c U)$ into a computable
acquisition score, using the symmetry of mutual information that gives 
$\mathrm{EIG}_{\Psi(x)}(c) = \mathbb{H}[V_c \mid \mathcal{D}_t, c] -
\mathbb{E}_{\Psi_x \sim \Pi_{t,x}}\bigl[\mathbb{H}[V_c \mid \Psi_x,
\mathcal{D}_t, c]\bigr]$.
Expanding on the $V_c$ side, avoids
updating the full posterior for every candidate: since $V_c$ is
binary, each term is the binary entropy $h(\cdot)$ of a success
probability, and the two terms differ only in what that probability
conditions on. The first conditions on what we already know before
answering, giving the marginal $p_{t,c} =
\mathbb{P}(V_c=1\mid\mathcal D_t,c)$: how likely a ``yes'' is
now. The second additionally conditions on
the value of the estimand itself, giving $\eta_{t,x,c}(\psi) =
\mathbb{P}(V_c=1\mid\Psi_x=\psi,\mathcal D_t,c)$: how likely a ``yes'' would be if $\Psi_x=\psi$ were already known. The gap
between $h(p_{t,c})$ and the $\Psi_x$-averaged
$h(\eta_{t,x,c}(\Psi_x))$ is how much knowing $\Psi_x$ would sharpen
our prediction of the answer, which also means by symmetry, how much the answer tells us about $\Psi_x$.

\begin{theorem}[Joint law and closed-form CPTE-EIG]
\label{thm:closed-form}
\label{prop:joint-law}
\label{lem:answer-probabilities}
Fix a target covariate $x\in\mathcal X$ and a candidate query $c$, and
abbreviate $\alpha_t(z)=\alpha_t(x,z)$ and $k_t^\Delta(z,z') =
k_t^\Delta\bigl((x,z),(x,z')\bigr)$. Under $q_t$,
$(\Psi_x, D_cU)$ is Gaussian with mean $\bigl(T_x(m_t),
m_t^\Delta(c)\bigr)$ and covariance $\Sigma = \begin{pmatrix}
\sigma_x^2 & \sigma_{x,c}\\ \sigma_{x,c} & \sigma_c^2\end{pmatrix}$,
where
\begin{align}
\sigma_x^2 &= \int\!\!\int \alpha_t(z)\alpha_t(z')\,k_t^\Delta(z,z')\,\nu_x(dz)\,\nu_x(dz'), \label{eq:sigma-x}\\
\sigma_{x,c} &= \int \alpha_t(z)\,k_t^\Delta(z,c)\,\nu_x(dz), \qquad
\sigma_c^2 = k_t^\Delta(c,c). \label{eq:sigma-c-xc}
\end{align}
Writing $\tau_{x,c}=\sigma_{x,c}/\sigma_x$ and
$s_{x,c}^2 = \lambda^2+\sigma_c^2-\tau_{x,c}^2$, 
then,
\begin{equation}
\label{eq:quadrature}
\mathrm{EIG}_{\Psi(x)}(c) \approx h\left(\Phi\!\left(\frac{m_t^\Delta(c)}{\sqrt{\lambda^2+\sigma_c^2}}\right)\right) - \sum_{i=1}^n
\frac{w_i}{\sqrt\pi}\,h\!\left(\Phi\!\left(\frac{m_t^\Delta(c)+\sqrt2\,\beta_i\,\tau_{x,c}}{s_{x,c}}\right)\right),
\end{equation}
where $(\beta_i,w_i)_{i\le n}$ are Gauss--Hermite nodes and weights.
\end{theorem}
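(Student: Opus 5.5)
We take the linearised form of \Cref{eq:linearisation} as the definition of $\Psi_x$, as the paper does after Theorem~\ref{thm:linearisation}. The plan has three parts: first identify the joint Gaussian law of $(\Psi_x, D_cU)$ under $q_t$; then compute the two answer probabilities $p_{t,c}$ and $\eta_{t,x,c}(\psi)$ by the standard probit--Gaussian integral; finally write the EIG as the entropy difference stated before the theorem and reduce the outer expectation to Gauss--Hermite quadrature.

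\textbf{Joint law.} Write $h=U-m_t$, which is a centred GP with kernel $k_t$ under $q_t$. The linearised estimand is then $\Psi_x - T_x(m_t) = \int \alpha_t(z)\,D_{x,z}h\,\nu_x(dz)$, and $D_cU - m_t^\Delta(c) = D_c h$. Both are continuous linear functionals of the same Gaussian process, so they are jointly Gaussian. This uses Lemma~\ref{lem:regularity}: $z\mapsto D_{x,z}h$ lies in $L^2(\nu_x)$ almost surely, and $\alpha_t\le \varphi(0)/\lambda$ is bounded. Hence the integral is an $L^2$ limit of finite linear combinations of Gaussian contrasts, and Gaussianity is preserved under $L^2$ limits. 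The means are immediate. The second moments follow from Fubini, which is justified by the bound $|k_t^\Delta|\le 4K_t$ from Assumption~\ref{ass:bounded-kernel}:
\[
\mathbb E\Bigl[\int\alpha_t D_{x,z}h\,\nu_x(dz)\int \alpha_t D_{x,z'}h\,\nu_x(dz')\Bigr]=\iint \alpha_t(z)\alpha_t(z')k_t^\Delta(z,z')\,\nu_x(dz)\,\nu_x(dz').
\]
The same computation gives $\sigma_{x,c}$, and $\sigma_c^2=k_t^\Delta(c,c)$ is the definition of $k_t^\Delta$. The step I expect to be the main obstacle is justifying the exchange of expectation and integral, together with the Gaussianity of the Bochner-type integral; the boundedness of the kernel should be enough for both.

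\textbf{Answer probabilities.} By \Cref{eq:likelihood_w-u} we have $\mathbb P(V_c=1\mid D_cU)=\Phi(D_cU/\lambda)$. We use the identity $\mathbb E[\Phi(G/\lambda)]=\Phi\bigl(\mu/\sqrt{\lambda^2+s^2}\bigr)$ for $G\sim\mathcal N(\mu,s^2)$, which follows by writing $\Phi(G/\lambda)=\mathbb P(\lambda\xi\le G)$ with $\xi\sim\mathcal N(0,1)$ independent of $G$. Applied to the marginal of $D_cU$, this gives $p_{t,c}=\Phi\bigl(m_t^\Delta(c)/\sqrt{\lambda^2+\sigma_c^2}\bigr)$. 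Conditioning the bivariate Gaussian on $\Psi_x=\psi$ gives
\[
D_cU\mid\Psi_x=\psi\sim\mathcal N\bigl(m_t^\Delta(c)+\sigma_{x,c}(\psi-T_x(m_t))/\sigma_x^2,\ \sigma_c^2-\tau_{x,c}^2\bigr).
\]
Since $V_c$ depends on $U$ only through $D_cU$, $V_c$ is conditionally independent of $\Psi_x$ given $D_cU$. Therefore $\eta_{t,x,c}(\psi)=\Phi\bigl((m_t^\Delta(c)+\tau_{x,c}\zeta)/s_{x,c}\bigr)$, where $\zeta=(\psi-T_x(m_t))/\sigma_x$. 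This assumes $\sigma_x>0$; if $\sigma_x=0$, then $\Psi_x$ is degenerate and the EIG is $0$, which the formula also returns in the limit $\tau_{x,c}\to 0$.

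\textbf{Quadrature.} Mutual-information symmetry gives $\mathrm{EIG}=h(p_{t,c})-\mathbb E[h(\eta_{t,x,c}(\Psi_x))]$. Under $\Pi_{t,x}$ the variable $\zeta$ is standard normal, so the second term equals $\int h\bigl(\Phi((m_t^\Delta(c)+\tau_{x,c}\zeta)/s_{x,c})\bigr)\varphi(\zeta)\,d\zeta$. The substitution $\zeta=\sqrt2\beta$ turns this into $\pi^{-1/2}\int e^{-\beta^2}f(\sqrt2\beta)\,d\beta$, and $n$-point Gauss--Hermite quadrature then yields \Cref{eq:quadrature}. The only approximations are therefore the linearisation, controlled by $B_t(x)$, and the quadrature; the integrand is smooth and bounded, so the quadrature error vanishes as $n$ grows.
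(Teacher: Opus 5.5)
Your proposal is correct and follows essentially the same route as the paper's proof. Both arguments establish joint Gaussianity of the affine functional $\Psi_x$ and $D_cU$, obtain the second moments by Fubini under the $4K_t$ bound, apply the probit--Gaussian identity via an auxiliary standard normal to get $p_{t,c}$ and, after Gaussian conditioning on $\Psi_x=\psi$, $\eta_{t,x,c}$, and finish with the change of variables to a Gauss--Hermite rule. Your $L^2$-limit argument that the $\nu_x$-integral stays Gaussian is more careful than the paper, which only asserts that affine functionals of $U$ are Gaussian; your $\sigma_x=0$ remark is just a convention ($\tau_{x,c}:=0$), since $\tau_{x,c}$ is $0/0$ there and the EIG is not continuous as $\sigma_x\to0$.
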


The proof, including the Gauss--Hermite quadrature step
\citep{liu_Note_1994}, is in Appendix~\ref{app:closed-form}. The
covariance $\sigma_{x,c}$ links the query to the
estimand. If $\sigma_{x,c}=0$, then $\tau_{x,c}=0$ and
\Cref{eq:quadrature} is zero, even when $\sigma_c^2>0$: the model is
uncertain about the query, but its answer carries no information about
$\Psi_x$. Example~\ref{ex:cancel} in Appendix~\ref{app:example} gives
such a query.
\Cref{eq:quadrature} requires no posterior sampling and no model refit:
all its terms are computed from $m_t$ and $k_t$ through
\Cref{eq:sigma-x,eq:sigma-c-xc}, at a fixed cost per candidate. The
CPTE-EIG score then averages $\mathrm{EIG}_{\Psi(x)}(c)$ over
${x^\star\sim p^\star}$. We approximate this outer expectation and the
$\nu_x$-integrals by Monte Carlo \citep{robert_Monte_2011}, as detailed
in Algorithm~\ref{alg:cpte-eig} (Appendix~\ref{app:algorithm}).

\section{Experiments}
\label{sec:experiments}

We evaluate CPTE-EIG on the Parkinson's disease example, where dopamine
agonists are compared with levodopa, two standard first-line treatments.
The main quantities are summarised in Table~\ref{tab:parkinson}. We consider four
follow-up horizons $h\in\{1,3,5,7\}$ years, which give four estimands
$\Psi_x^{(h)}$. We compare CPTE-EIG with three baselines: BALD
\citep{houlsby_Bayesian_2011b}, a widely used criterion for preference
learning, the prediction-oriented EPIG
\citep{smith_PredictionOriented_2023b}, and random acquisition.
EPIG is given the same targets as CPTE-EIG, namely the treated--control
pairs drawn from $\nu_x$ for $x\sim p^\star$. The two methods therefore
differ only in the quantity they target: the preference label of each
pair for EPIG, and their average $\Psi_x$ for CPTE-EIG.

\begin{figure}[t]
\centering
\includegraphics[width=\textwidth]{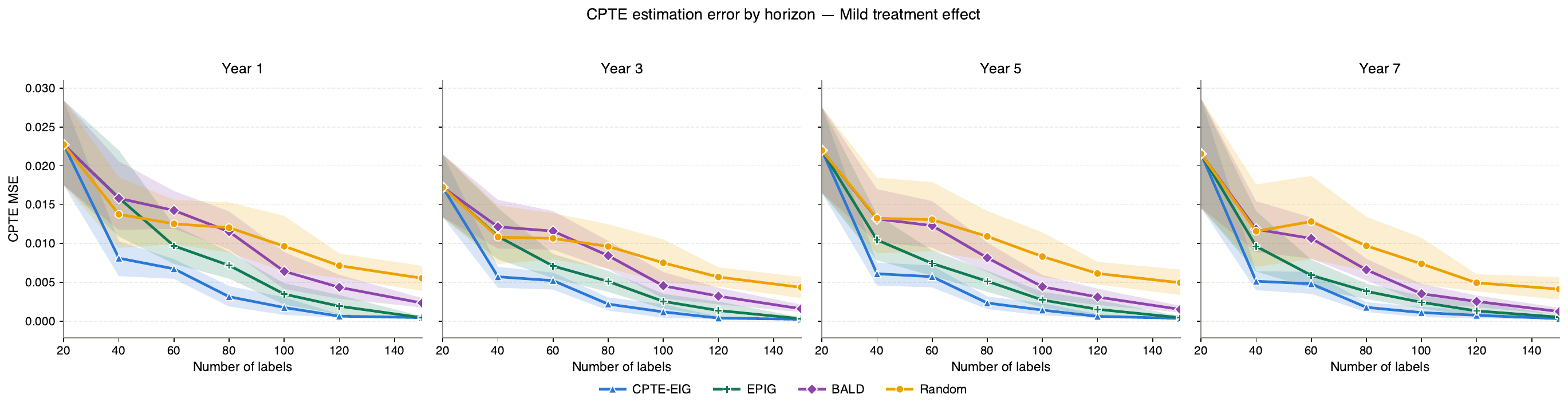}\\
\includegraphics[width=\textwidth]{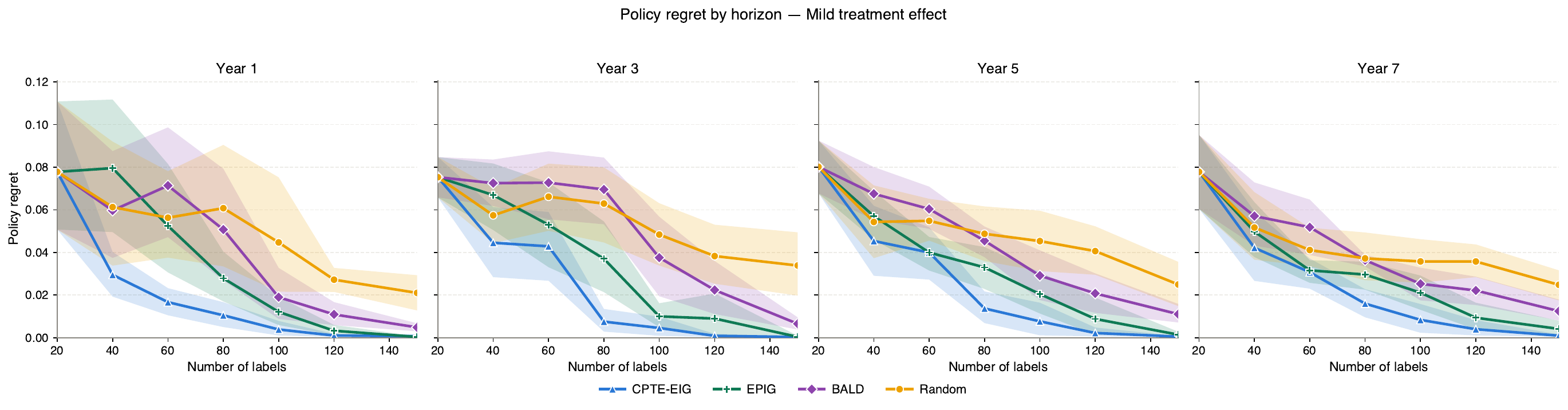}
\caption{Mild treatment effect, per horizon $h$. Top: MSE of $T_x(m_t)$ against the oracle CPTE over 100 target patients from $p^\star$. Bottom: regret of $\pi_t(x)=\mathbf{1}\{T_x(m_t)>1/2\}$. Mean over 15 seeds, 95\% bootstrap intervals. Lower is better.}
\label{fig:effect-cpte}
\end{figure}

\textbf{Setup.}
We build a semi-synthetic cohort of 2,000 patients from PPMI
\citep{marek_Parkinsons_2018} with both potential outcomes known, giving
an oracle CPTE against which every method is scored, and a separate pool
of 1.37M candidate comparisons between real PPMI profiles for
elicitation. The clinician is simulated with a known utility that is
linear in the outcomes, with covariate-dependent weights,
$
u(\tilde x, y) = -\sum_{d} w_d(\tilde x) \sum_{j\in G_d} y_j,
$
where $G_d$ groups the MDS-UPDRS items into five domains: neuropsychiatric,
sleep and autonomic, motor, motor complications, and quality of life.
All weights are positive, so $u$ is decreasing in every symptom. The neuropsychiatric weight increases linearly with age, the motor weight decreases linearly with age, and the
motor-complication weights depend on sex. The other weights are constant. Learning $u$ over the whole covariate space therefore
requires comparisons across all ages and both sexes. We consider three treatment effects. In the \emph{tiny} effect, the
potential outcomes are those of the semi-synthetic cohort, where the two treatments lead to nearly identical outcomes, leading to $\Psi_x^{(h)}\approx
1/2$. In the \emph{mild} and \emph{strong} effects, we keep the original $Y_0$ and set $Y_1=\mathrm{clip}(Y_0+s\,\delta(\tilde x,h),0,4)$, with $s=1$ and $s=2$. Here the agonist improves tremor but worsens cognition by an
amount that grows with $h$ and age, so the best treatment depends on the
patient, the horizon and the utility. 
All methods use the same Gaussian process model for the latent utility,
the same warm start of 20 random comparisons, and the same candidate sets. The kernel hyperparameters
and $\lambda$ are estimated by marginal likelihood at each iteration.
The budget ranges from 20 to 150 comparisons, and each experiment is
repeated with 15 independent seeds. Reported intervals are 95\% bootstrap
intervals for the mean over seeds, computed from 3000 resamples. Further
details on the GP and the setup are given in Appendix~\ref{app:experiments}.

\textbf{Results.}
With a mild treatment effect (Figure~\ref{fig:effect-cpte}), CPTE-EIG
has the lowest CPTE MSE and policy regret at every budget from 40
comparisons on. With 80 comparisons, its error ($2.4\times10^{-3}$) is
already below that of EPIG with 100 comparisons and of BALD with 120, and
its regret ($0.012$) is about a third of EPIG's and a quarter of BALD's.
Since the three criteria share the same utility model, the differences in performance come only from the quantity
they target: the utility as a whole for BALD, the label of each treated--control pair for EPIG, and the
average over these pairs, $\Psi_x$, for CPTE-EIG.
With a strong effect, the gap is larger: at 80 comparisons, the error of
CPTE-EIG is five times lower than that of EPIG and fourteen times lower
than that of BALD (Table~\ref{tab:summary}), and the interval of BALD
does not overlap that of CPTE-EIG at any budget.
With a tiny effect, CPTE-EIG, EPIG and BALD are tied, with overlapping
intervals at every budget, and all three have a lower error than random
acquisition. The errors are also smaller than with larger effects: at 80
comparisons, the error of random acquisition is $2.9$, against $10.6$ and
$24.3$ with mild and strong effects (all $\times10^{-3}$).
Overall, CPTE-EIG is the best-performing method in terms of CPTE accuracy,
policy regret and robustness across scenarios: it is as good as the other
methods when the effect is tiny, and outperforms them when the effect is
mild or strong.

\begin{table}[t]
\centering
\footnotesize
\caption{CPTE MSE ($\times 10^{3}$) against the number of comparisons, over 100 target patients
from $p^\star$ (fixed across seeds and methods) and averaged over horizons: mean over
15 seeds and 95\% percentile-bootstrap interval. Star $^{*}$ for lowest mean (ties included);
\textbf{bold} for methods with interval overlapping that of a $^{*}$ method.}
\label{tab:summary}
\begin{tabular}{llcccc}
\toprule
& & \multicolumn{4}{c}{Budget (number of comparisons)}\\
\cmidrule(lr){3-6}
Effect & Method & 40 & 80 & 120 & 150\\
\midrule
Tiny & Random   & 5.0 [4.0, 6.2] & 2.9 [2.3, 3.5] & 2.2 [1.8, 2.5] & 2.5 [2.1, 3.0]\\
& BALD     & \textbf{$^{*}$3.2 [2.7, 3.7]} & \textbf{1.3 [1.1, 1.4]} & \textbf{$^{*}$0.6 [0.5, 0.7]} & \textbf{$^{*}$0.3 [0.3, 0.4]}\\
& EPIG     & \textbf{3.3 [2.7, 3.9]} & \textbf{$^{*}$1.1 [1.0, 1.2]} & \textbf{0.7 [0.6, 0.8]} & \textbf{0.4 [0.3, 0.5]}\\
& CPTE-EIG & \textbf{$^{*}$3.2 [2.8, 3.7]} & \textbf{1.3 [1.1, 1.4]} & \textbf{$^{*}$0.6 [0.5, 0.7]} & \textbf{0.4 [0.3, 0.5]}\\
\midrule
Mild & Random   & 12.4 [8.4, 17.0] & 10.6 [7.6, 13.7] & 6.0 [4.7, 7.3] & 4.7 [3.3, 6.2]\\
& BALD     & 13.2 [10.2, 16.9] & 8.7 [7.0, 10.6] & 3.3 [2.3, 4.4] & 1.7 [1.2, 2.2]\\
& EPIG     & 11.7 [8.5, 15.9] & 5.3 [4.0, 6.7] & \textbf{1.5 [0.7, 2.8]} & \textbf{0.4 [0.3, 0.6]}\\
& CPTE-EIG & \textbf{$^{*}$6.3 [4.7, 7.7]} & \textbf{$^{*}$2.4 [1.5, 3.3]} & \textbf{$^{*}$0.6 [0.4, 0.9]} & \textbf{$^{*}$0.3 [0.3, 0.5]}\\
\midrule
Strong & Random   & 29.4 [19.3, 41.1] & 24.3 [16.6, 32.7] & 13.7 [10.8, 16.6] & 11.3 [7.6, 15.2]\\
& BALD     & 29.4 [23.2, 36.9] & 18.3 [14.4, 22.3] & 7.0 [4.8, 9.3] & 3.4 [2.4, 4.6]\\
& EPIG     & \textbf{20.5 [13.4, 29.8]} & 6.6 [4.5, 8.9] & 2.5 [1.3, 4.0] & \textbf{0.6 [0.4, 0.7]}\\
& CPTE-EIG & \textbf{$^{*}$13.4 [10.7, 16.1]} & \textbf{$^{*}$1.3 [1.0, 1.6]} & \textbf{$^{*}$0.9 [0.7, 1.2]} & \textbf{$^{*}$0.4 [0.4, 0.5]}\\
\bottomrule
\end{tabular}
\end{table}

\section{Discussion and conclusion}

In this paper, we introduced CPTE-EIG, an active learning criterion that
selects clinician comparisons by their expected information gain about the
CPTE. We linearised the CPTE around the posterior mean utility, which
gave a Gaussian approximation of its posterior law, with an error bound.
Under this approximation, CPTE-EIG becomes computationally tractable.
On a semi-synthetic Parkinson's disease cohort with 13 outcomes, CPTE-EIG
was the best-performing method in terms of CPTE estimation accuracy,
policy regret and robustness across scenarios.

Our framework has limitations that suggest several extensions. We
condition throughout on the estimated potential-outcome laws
$\hat{\mu}_a$ and treat the preference rule as the only source of
uncertainty. Propagating the uncertainty in $\hat{\mu}_a$ as well would
give a more complete account of the uncertainty in the CPTE estimate.
We also model a single, average clinician, whereas real clinicians
can disagree with one another. Multi-user preference models
\citep{houlsby_Collaborative_2012}, for instance with a shared utility and
a per-clinician offset, could represent this disagreement. The choice of
which clinician to ask would then become a second decision, next to the
choice of which pair to show.
Our experiments use 13 outcomes, more than the two to four typically
considered in the composite-endpoint or multi-criteria decision literature
\citep{freemantle_Composite_2003}, and CPTE-EIG performs best in this
setting. With many more outcomes, the limit may come from elicitation rather than
from the model, as profiles with too many dimensions may be hard to compare
in a single judgment.
Finally, the clinician in our experiments is simulated from a known
utility, so that the error can be measured against an oracle CPTE. An
important next step is to test the method on real preference data, which
will include disagreement between clinicians and inconsistent answers.

When a treatment effect is built from clinician preferences, how these
preferences are elicited matters as much as how the effect is estimated.
In our Parkinson's disease experiments, targeting the estimand itself made a limited
elicitation budget go further, and we expect the same to hold wherever
outcomes are compared rather than measured on a common scale.

\subsubsection*{Acknowledgments}
We are grateful to Julie Josse for her regular feedback and advice throughout this project. We thank Tristan Fauvel for helpful discussions on preference learning during his time at Theremia. Finally, we thank Freddie Bickford Smith for kindly taking the time to meet with us and answer our questions.

\subsection*{AI use statement}
We used large language models (LLMs) to help derive our theoretical analysis, and LLM coding assistants to help implement our method and experiments. We also used LLMs to help with the writing and proofreading of the paper, and to give feedback on drafts, including on the methodology and the experiments. The authors verified all AI-assisted derivations, reviewed and tested all AI-assisted code, and checked and edited all AI-assisted text. We take full responsibility for the final content of this work, including text, claims, proofs and code produced with the help of LLMs.

\subsection*{Ethics statement}
Our experiments use data from the Parkinson's Progression Markers Initiative (PPMI) \citep{marek_Parkinsons_2018}, an observational cohort approved by the ethics committees of the participating sites, in which all participants gave written informed consent. We used the de-identified data under the PPMI data use agreement and do not redistribute them. We collected no new data from patients or clinicians: the clinician who answers the comparisons is simulated.

The experiments are designed to evaluate the methodology, not to answer a clinical question. The evaluation cohort is semi-synthetic, and the utilities that play the role of the clinician were chosen by us rather than elicited from clinicians. The estimated treatment effects and policies therefore have no clinical meaning and should not be used to guide the treatment of patients with Parkinson's disease. More generally, a CPTE is only as valid as its inputs. It requires the causal assumptions of Section~\ref{sec:data}, and it depends on whose preferences are elicited and on which patient characteristics enter the query covariate: our choice of $\gamma$, for instance, excludes ethnicity (Section~\ref{sec:preferences}). Since CPTE-EIG concentrates comparisons on the target population, the learned preference rule may also be less accurate for groups that are rare in this population. Clinical use would require preferences elicited from real clinicians, and ideally from patients, and validation on independent data.

\subsection*{Reproducibility statement}
All assumptions are stated in the paper (Assumption~\ref{ass:identification} in Section~\ref{sec:data} and Assumption~\ref{ass:bounded-kernel} in Appendix~\ref{app:integrability}), and complete proofs of Theorems~\ref{thm:linearisation} and~\ref{thm:closed-form} are given in Appendices~\ref{app:integrability}--\ref{app:closed-form}. Algorithm~\ref{alg:cpte-eig} (Appendix~\ref{app:algorithm}) details the query selection procedure. Appendix~\ref{app:experiments} describes the construction of the semi-synthetic cohort and of the candidate pool, the simulated clinicians, the compared methods, the budgets, the seeds and the metrics. 

\bibliography{iclr2027_conference}
\bibliographystyle{iclr2027_conference}


\clearpage

\begin{appendices}

\setcounter{figure}{0}
\renewcommand{\thefigure}{S\arabic{figure}}

\setcounter{table}{0}
\renewcommand{\thetable}{S\arabic{table}}

\startcontents[appendices] 
\printcontents[appendices]{}{1}{}

\section{Bounded contrasts and integrability}
\label{app:integrability}

This appendix states the assumption and regularity result used by
Theorem~\ref{thm:linearisation} in Section~\ref{sec:linearization},
proves the variance bound they rely on, and then proves lemma \ref{lem:regularity}.

\begin{assumption}[Bounded posterior variance]
\label{ass:bounded-kernel}
The posterior kernel $k_t$ is measurable and bounded on its diagonal:
there exists $K_t < \infty$ such that $k_t(r, r) \leq K_t$ for all
$r \in \tilde{\mathcal{X}} \times \mathcal{Y}$.
\end{assumption}

Recall that for two contrast indices $r=(r_1,r_0)$ and $r'=(r_1',r_0')$ in $\tilde{\mathcal X}\times\mathcal Y$,
\[
  k_t^\Delta(r,r') = k_t(r_1,r_1') - k_t(r_1,r_0') - k_t(r_0,r_1') + k_t(r_0,r_0').
\]

Setting $r'=r$ gives the variance, under $q_t$, of a single contrast:
\[
  k_t^\Delta(r,r) = \operatorname{Var}_{q_t}(D_r U)
  = k_t(r_1,r_1) + k_t(r_0,r_0) - 2\,k_t(r_1,r_0),
\]
using the symmetry $k_t(r_1,r_0)=k_t(r_0,r_1)$ of the kernel.

Assumption~\ref{ass:bounded-kernel} bounds the two diagonal terms,
$k_t(r_1,r_1)\le K_t$ and $k_t(r_0,r_0)\le K_t$. Since $k_t$ is a covariance
kernel it is positive semi-definite, so Cauchy--Schwarz applies to the cross
term:
\[
  |k_t(r_1,r_0)| \le \sqrt{k_t(r_1,r_1)\,k_t(r_0,r_0)} \le K_t.
\]
Combining the three bounds,
\begin{equation}
  0 \;\le\; k_t^\Delta(r,r) \;\le\; K_t + K_t + 2K_t \;=\; 4K_t \;<\; \infty,
  \label{eq:contrast-var-bound}
\end{equation}
for every $r\in\tilde{\mathcal X}\times\mathcal Y$. The same argument applies
to $k_t^\Delta$ itself, which is a valid covariance kernel on contrasts
(each $D_rU$ is a linear functional of the Gaussian field $U$), so for two
possibly different contrasts $r,r'$,
\[
  |k_t^\Delta(r,r')| \le \sqrt{k_t^\Delta(r,r)\,k_t^\Delta(r',r')} \le 4K_t.
\]
Every posterior contrast therefore has variance at most $4K_t$, and every
pair of contrasts has covariance at most $4K_t$ in absolute value.

\begin{lemma}[Regularity of the contrast]
\label{lem:regularity}
Under Assumption~\ref{ass:bounded-kernel}, write $h = U - m_t$. For every
$x \in \mathcal{X}$ and $q_t$-almost every realisation of $h$, the map $z
\mapsto D_{x,z} h$ belongs to $L^2(\nu_x) \subset L^1(\nu_x)$.
\end{lemma}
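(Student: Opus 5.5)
The plan is to prove the $L^2(\nu_x)$ statement in expectation and then pass to almost every realisation. Fix $x\in\mathcal X$ and consider the random function $z\mapsto D_{x,z}h$, with $h=U-m_t$. It is a centred Gaussian field indexed by $z\in\mathcal Y^2$, with covariance $k_t^\Delta\bigl((x,z),(x,z')\bigr)$. The key quantity is the random integral
\[
  N(h)=\int_{\mathcal Y^2} \bigl(D_{x,z}h\bigr)^2\,\nu_x(dz)\in[0,\infty].
\]
It suffices to show $\mathbb E_{q_t}[N(h)]<\infty$, since a nonnegative random variable with finite mean is finite almost surely. That is exactly the statement $D_{x,\cdot}h\in L^2(\nu_x)$ for $q_t$-a.e.\ $h$.

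\textbf{Step 1: joint measurability.} Before exchanging expectation and integral, I need $(h,z)\mapsto D_{x,z}h$ to be jointly measurable on $\Omega\times\mathcal Y^2$. I would argue this from Assumption~\ref{ass:bounded-kernel}, since $k_t$ is measurable. The field $h$ is centred Gaussian with a measurable covariance, so it admits a measurable modification; for example, use a separable version, or note that for RBF/Mat\'ern posteriors the sample paths are continuous. I will fix such a version of $U$ throughout. This version choice is harmless because $\Psi_x$ and all later quantities depend only on this version. I expect this step to be the main (technical) obstacle. If a general argument is too heavy, I would state the measurable-version choice explicitly as part of the setup.

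\textbf{Step 2: Tonelli and the variance bound.} With joint measurability in hand, Tonelli's theorem applies to the nonnegative integrand:
\[
  \mathbb E_{q_t}[N(h)]=\int \mathbb E_{q_t}\bigl[(D_{x,z}h)^2\bigr]\,\nu_x(dz)=\int k_t^\Delta\bigl((x,z),(x,z)\bigr)\,\nu_x(dz)\le 4K_t,
\]
where the inequality is \eqref{eq:contrast-var-bound} and uses that $\nu_x$ is a probability measure. Hence $N(h)<\infty$ almost surely.

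\textbf{Step 3: from $L^2$ to $L^1$.} The inclusion $L^2(\nu_x)\subset L^1(\nu_x)$ follows from Cauchy--Schwarz, again because $\nu_x$ is a probability measure: $\int|f|\,d\nu_x\le(\int f^2\,d\nu_x)^{1/2}$.

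\textbf{Remark for the later application.} Theorem~\ref{thm:linearisation} needs the statement simultaneously for the finitely many or countably many $x$ used downstream. A countable union of null sets is null, so that version follows. Alternatively, "for every $x$" is read as a fixed $x$, with the null set allowed to depend on $x$, which is all the theorem requires.
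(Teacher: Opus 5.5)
Your argument matches the paper's proof step for step: the same nonnegative functional $\int (D_{x,z}h)^2\,\nu_x(dz)$, Tonelli to bound its expectation by $\int k_t^\Delta((x,z),(x,z))\,\nu_x(dz)\le 4K_t$, almost-sure finiteness, and Cauchy--Schwarz for the $L^1$ inclusion. You are more careful than the paper on joint measurability, which it simply asserts from measurability of $k_t$; however, a measurable covariance alone does not guarantee a measurable modification (an i.i.d.\ Gaussian field on $[0,1]$ is the standard counterexample), so the way to close that point is one of your fallbacks: fix a measurable version, use mean-square continuity of RBF/Mat\'ern posteriors, or note that $\mathcal Y$ is finite for bounded ordinal scores.
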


\begin{proof}
Fix $x\in\mathcal X$ and write $h = U - m_t$, a $q_t$-centered Gaussian
process with covariance $k_t$. For $z=(y_1,y_0)\in\mathcal Y^2$, recall that
$D_{x,z}h = h(\gamma(x),y_1) - h(\gamma(x),y_0)$, and define the nonnegative
random variable
\[
  S_x(h) = \int_{\mathcal Y^2} \big(D_{x,z}h\big)^2\,\nu_x(dz).
\]

Since $k_t$ is measurable (Assumption~\ref{ass:bounded-kernel}), the map
$(z,\omega)\mapsto \big(D_{x,z}h(\omega)\big)^2$ is jointly measurable and
nonnegative, so Tonelli's theorem lets us exchange the expectation under
$q_t$ with the integral over $\nu_x$:
\[
\mathbb E_{q_t}[S_x(h)]
=
\mathbb E_{q_t}
\left[
\int (D_{x,z}h)^2\,\nu_x(dz)
\right]
=
\int
\mathbb E_{q_t}\left[(D_{x,z}h)^2\right]
\nu_x(dz).
\]
Because $h$ is centered under $q_t$, so is $D_{x,z}h$, hence
\[
  \mathbb E_{q_t}\big[(D_{x,z}h)^2\big]
  = \operatorname{Var}_{q_t}(D_{x,z}U)
  = k_t^\Delta\big((x,z),(x,z)\big)
  \;{\le}\; 4K_t,
\]
taking $r=(x,z)$, i.e.\ $r_1=(\gamma(x),y_1)$, $r_0=(\gamma(x),y_0)$, in
\Cref{eq:contrast-var-bound}. Since $\nu_x$ is a probability measure,
\[
  \mathbb E_{q_t}[S_x(h)] \;\le\; \int_{\mathcal Y^2} 4K_t\,\nu_x(dz) \;=\; 4K_t \;<\; \infty.
\]

$S_x(h)$ is nonnegative with finite expectation, so it is $q_t$-almost
surely finite: $S_x(h)<\infty$ for $q_t$-almost every $h$. By definition of
$S_x$, this says exactly that, for $q_t$-almost every $h$,
\[
  z \mapsto D_{x,z}h \;\in\; L^2(\nu_x).
\]

Since $\nu_x$ is a probability measure, Cauchy--Schwarz gives, for
$q_t$-almost every $h$,
\[
  \int_{\mathcal Y^2} |D_{x,z}h|\,\nu_x(dz)
  \;\le\;
  \left(\int_{\mathcal Y^2} (D_{x,z}h)^2\,\nu_x(dz)\right)^{1/2}
  = S_x(h)^{1/2} \;<\; \infty,
\]
so $z\mapsto D_{x,z}h$ also belongs to $L^1(\nu_x)$. This holds for every
$x\in\mathcal X$, which proves the lemma.
\end{proof}

\section{Proof of Theorem~\ref{thm:linearisation}}
\label{app:linearisation}

Fix a target covariate $x \in \mathcal X$. The exact CPTE induced by a
posterior draw $U \sim q_t$ is
\[
T_x(U) = \int_{\mathcal Y^2} \Phi\!\left(\frac{D_{x,z}U}{\lambda}\right)\nu_x(dz).
\]
Because of the nonlinear map $\Phi$, $T_x$ is not a linear functional
of $U$. We linearise $T_x$ around the posterior mean $m_t$, bound the
error this introduces, and record the resulting joint Gaussianity of
$(\Psi_x, D_cU)$.

Write $U = m_t + h$, where $h = U - m_t$. By Lemma~\ref{lem:regularity},
for $q_t$-almost every realisation of $h$,
\[
D_{x,\cdot}h \in L^2(\nu_x) \subset L^1(\nu_x).
\]
We fix such a realisation of $h$ for the remainder of the proof.

\subsection{The linearisation}

Following the standard approach to linearising a nonlinear functional,
we study the change of $T_x$ along the one-dimensional path
$m_t + \epsilon h$, $\epsilon \in \mathbb R$. Define
\[
g(\epsilon) = T_x(m_t + \epsilon h).
\]
When it exists, $g'(0)$ is exactly the Gateaux derivative of $T_x$ at
$m_t$ in the direction $h$, $g'(0) = \mathrm dT_x(m_t)(h)$. We later establish its existence directly, by differentiating under the
integral sign, without refering to any general Gateaux-differentiability
theorem. Two values of $g$ matter:
\[
g(0) = T_x(m_t), \qquad g(1) = T_x(m_t+h) = T_x(U).
\]
A first-order Taylor expansion of $g$ around $\epsilon = 0$, evaluated
at $\epsilon = 1$, will therefore give the linearisation of $T_x(U)$
that we want.

By linearity of the contrast operator $D_{x,z}$,
\[
D_{x,z}(m_t + \epsilon h) = m_t^\Delta(x,z) + \epsilon\, D_{x,z}h,
\qquad m_t^\Delta(x,z) := D_{x,z}m_t,
\]
so that
\begin{equation}
\label{eq:g-integral-form}
g(\epsilon) = \int_{\mathcal Y^2} f(\epsilon, z)\,\nu_x(dz),
\qquad
f(\epsilon,z) = \Phi\!\left(\frac{m_t^\Delta(x,z) + \epsilon\, D_{x,z}h}{\lambda}\right).
\end{equation}

We justify that $g'(0)$ exists and can be computed by differentiating
\Cref{eq:g-integral-form} under the integral sign
\citep{folland_Real_2013}. This standard result requires three
conditions to hold for $f(\epsilon,z)$ on some open interval $I$ including $0$:
\begin{itemize}
\item[(i)] for every $\epsilon \in I$, the map $z \mapsto f(\epsilon,z)$
is $\nu_x$-integrable;
\item[(ii)] for $\nu_x$-almost every $z$, the map
$\epsilon \mapsto f(\epsilon,z)$ is differentiable on $I$;
\item[(iii)] there exists $M \in L^1(\nu_x)$ such that, for every
$\epsilon \in I$ and $\nu_x$-almost every $z$,
$|\partial_\epsilon f(\epsilon,z)| \le M(z)$.
\end{itemize}
If (i)--(iii) hold, then $g$ is differentiable on $I$ and
$g'(\epsilon) = \int \partial_\epsilon f(\epsilon,z)\,\nu_x(dz)$.

\textbf{Condition (i).} Immediate, since $0 \le \Phi \le 1$, so
$f(\epsilon, \cdot)$ is bounded and $\nu_x$ is a probability measure.

\textbf{Condition (ii).} Immediate, since $\Phi$ is continuously
differentiable ($\Phi \in C^1$), with derivative $\varphi = \Phi'$,
the standard normal density. Explicitly,
\begin{equation}
\label{eq:partial-eps-f}
\partial_\epsilon f(\epsilon,z) = \frac{D_{x,z}h}{\lambda}\,
\varphi\!\left(\frac{m_t^\Delta(x,z) + \epsilon\, D_{x,z}h}{\lambda}\right).
\end{equation}

\textbf{Condition (iii).} Since $\varphi$ is bounded,
$0 \le \varphi(u) \le 1/\sqrt{2\pi}$ for every $u \in \mathbb R$, we get,
uniformly in $\epsilon$,
\[
|\partial_\epsilon f(\epsilon,z)| \le \frac{|D_{x,z}h|}{\lambda\sqrt{2\pi}} =: M(z).
\]
By Lemma~\ref{lem:regularity}, $D_{x,\cdot}h \in L^1(\nu_x)$, so
$M \in L^1(\nu_x)$: condition (iii) holds.

All three conditions hold, so differentiation under the integral sign
applies. Evaluating \Cref{eq:partial-eps-f} at $\epsilon = 0$ gives
\[
g'(0) = \int_{\mathcal Y^2} \frac{D_{x,z}h}{\lambda}\,
\varphi\!\left(\frac{m_t^\Delta(x,z)}{\lambda}\right)\nu_x(dz)
= \int_{\mathcal Y^2} \alpha_t(x,z)\, D_{x,z}h \,\nu_x(dz),
\]
where we recognise the sensitivity weight $\alpha_t(x,z)$ of
\Cref{eq:sensitivity}.

By Taylor's theorem, $g(1) = g(0) + g'(0) + R_x(h)$, where $R_x(h)$
is the remainder. Substituting $g(0) = T_x(m_t)$, $g(1) = T_x(U)$, and
the expression for $g'(0)$ from Step 2,
\[
T_x(U) = T_x(m_t) + \int_{\mathcal Y^2} \alpha_t(x,z)\, D_{x,z}h\,\nu_x(dz) + R_x(h).
\]
The linearised approximation consists in neglecting $R_x(h)$. Since
$D_{x,z}h = D_{x,z}U - m_t^\Delta(x,z)$, this gives
\begin{equation}
\label{eq:linearisation-proof}
T_x(U) \approx T_x(m_t) + \int_{\mathcal Y^2} \alpha_t(x,z)\,\bigl\{D_{x,z}U - m_t^\Delta(x,z)\bigr\}\,\nu_x(dz),
\end{equation}
which is exactly \Cref{eq:linearisation}. We adopt the right-hand
side of \Cref{eq:linearisation-proof} as the definition of $\Psi_x$
from here on.

The right-hand side of \Cref{eq:linearisation-proof} is an affine functional of the Gaussian process $U$, so $\Psi_x$ is Gaussian under $q_t$. For any
candidate query $c$, the contrast $D_cU$ is also a linear functional
of $U$. Two jointly Gaussian-process-linear functionals of the same
Gaussian field are themselves jointly Gaussian, so $(\Psi_x, D_cU)$ is
jointly Gaussian, as used in Theorem~\ref{thm:closed-form}.

\subsection{The remainder}

We control $R_x(h)$ using Taylor's theorem with integral remainder:
if $g'$ is absolutely continuous on $[0,1]$ and $g'' \in L^1([0,1])$,
then
\begin{equation}
\label{eq:taylor-remainder}
g(1) = g(0) + g'(0) + R_x(h), \qquad R_x(h) = \int_0^1 (1-\epsilon)\,g''(\epsilon)\,d\epsilon.
\end{equation}
We verify the hypothesis by differentiating $\partial_\epsilon f$
under the integral sign a second time, checking conditions (i)--(iii)
again.

Since $\Phi \in C^2$, the second derivative of the integrand is
\begin{equation}
\label{eq:partial2-eps-f}
\partial_\epsilon^2 f(\epsilon,z) = \frac{(D_{x,z}h)^2}{\lambda^2}\,
\Phi''\!\left(\frac{m_t^\Delta(x,z) + \epsilon\, D_{x,z}h}{\lambda}\right).
\end{equation}
\begin{itemize}
\item[(i)] For every $\epsilon$, $z \mapsto \partial_\epsilon f(\epsilon,z)$
is $\nu_x$-integrable, since it is bounded in absolute value by a
constant times $|D_{x,z}h| \in L^1(\nu_x)$ (already used above).
\item[(ii)] For $\nu_x$-almost every $z$,
$\epsilon \mapsto \partial_\epsilon f(\epsilon,z)$ is differentiable,
since $\Phi \in C^2$.
\item[(iii)] We need a dominating $L^1(\nu_x)$ function for
\Cref{eq:partial2-eps-f}. Using $\Phi''(u) = \varphi'(u) = -u\,\varphi(u)$,
\[
|\Phi''(u)| \le \sup_{v \in \mathbb R} |v\,\varphi(v)| = \varphi(1),
\]
so, uniformly in $\epsilon$,
\[
|\partial_\epsilon^2 f(\epsilon,z)| \le \frac{\varphi(1)}{\lambda^2}(D_{x,z}h)^2.
\]
By Lemma~\ref{lem:regularity}, $D_{x,\cdot}h \in L^2(\nu_x)$, so the
right-hand side lies in $L^1(\nu_x)$: condition (iii) holds.
\end{itemize}

All three conditions hold, so $g$ is twice differentiable on $[0,1]$
with
\[
g''(\epsilon) = \int_{\mathcal Y^2} \frac{(D_{x,z}h)^2}{\lambda^2}\,
\Phi''\!\left(\frac{m_t^\Delta(x,z) + \epsilon\, D_{x,z}h}{\lambda}\right)\nu_x(dz),
\]
and, using the bound established in condition (iii) above, for every
$\epsilon \in [0,1]$,
\[
|g''(\epsilon)| \le \frac{\varphi(1)}{\lambda^2}\int_{\mathcal Y^2}(D_{x,z}h)^2\,\nu_x(dz).
\]

Combining this bound with \Cref{eq:taylor-remainder},
\[
|R_x(h)| \le \int_0^1 (1-\epsilon)\,|g''(\epsilon)|\,d\epsilon
\le \frac{\varphi(1)}{\lambda^2}\left(\int_0^1 (1-\epsilon)\,d\epsilon\right)\int_{\mathcal Y^2}(D_{x,z}h)^2\,\nu_x(dz).
\]
Since $\int_0^1(1-\epsilon)\,d\epsilon = 1/2$,
\begin{equation}
\label{eq:pointwise-remainder-bound}
|R_x(h)| \le \frac{\varphi(1)}{2\lambda^2}\int_{\mathcal Y^2}(D_{x,z}h)^2\,\nu_x(dz).
\end{equation}
Taking expectation under $q_t$ on both sides of
\Cref{eq:pointwise-remainder-bound}, and exchanging expectation and
$\nu_x$-integral by Tonelli's theorem exactly as in the proof of
Lemma~\ref{lem:regularity} (the integrand is nonnegative), we get
\[
\mathbb E_{q_t}\bigl[|R_x(h)|\bigr]
\le \frac{\varphi(1)}{2\lambda^2}\int_{\mathcal Y^2} \mathbb E_{q_t}\bigl[(D_{x,z}h)^2\bigr]\,\nu_x(dz).
\]
Since $h$ is centered under $q_t$, so is $D_{x,z}h$, and hence
$\mathbb E_{q_t}[(D_{x,z}h)^2] = \operatorname{Var}_{q_t}(D_{x,z}U) = k_t^\Delta\bigl((x,z),(x,z)\bigr)$.
Substituting,
\[
\mathbb E_{q_t}\bigl[|R_x(h)|\bigr]
\le \frac{\varphi(1)}{2\lambda^2}\int_{\mathcal Y^2} k_t^\Delta\bigl((x,z),(x,z)\bigr)\,\nu_x(dz) =: B_t(x),
\]
which is the bound from the theorem. This completes the proof.

\medskip
\noindent\textit{Interpretation.} $B_t(x)$ is the average, over
$z \sim \nu_x$, of the posterior variance of the contrast
$D_{x,z}U$. The linearisation is therefore accurate when the
posterior is already confident about how outcome pairs at $x$
compare (small $k_t^\Delta$ relative to $\lambda^2$), and can be loose
early in active learning, when the posterior is still wide, but
tightens as informative comparisons accumulate.

\section{Proof of Theorem~\ref{thm:closed-form}}
\label{app:closed-form}

Throughout, fix a target covariate $x \in \mathcal X$ and a candidate
query $c$, and use the shorthand $\alpha_t(z) = \alpha_t(x,z)$ and
$k_t^\Delta(z,z') = k_t^\Delta\bigl((x,z),(x,z')\bigr)$ from the
theorem statement. The proof has four parts: (1) the mean and
covariance of $(\Psi_x, D_cU)$; (2) two standard properties about jointly
Gaussian and probit-linked random variables; (3) the closed-form
answer probabilities $\eta_{t,x,c}$ and $p_{t,c}$, obtained by
applying these properties; and (4) the Gauss--Hermite quadrature that
turns the resulting expectation into \Cref{eq:quadrature}.

\subsection{Mean and covariance of $(\Psi_x, D_cU)$}
\label{app:closed-form-covariance}

By Theorem~\ref{thm:linearisation},
\[
\Psi_x = T_x(m_t) + \int_{\mathcal Y^2} \alpha_t(z)\,\bigl\{D_{x,z}U - m_t^\Delta(x,z)\bigr\}\,\nu_x(dz),
\]
an affine functional of $U$, and $D_cU$ is a linear functional of $U$.
Since $U$ is Gaussian under $q_t$, so is any finite collection of
linear (or affine) functionals of $U$. In particular $(\Psi_x, D_cU)$
is jointly Gaussian.

\paragraph{Mean} Taking expectations, $\mathbb E_{q_t}[D_{x,z}U -
m_t^\Delta(x,z)] = 0$ for every $z$ (by definition of $m_t^\Delta$ as
the posterior mean contrast), so $\mathbb E_{q_t}[\Psi_x] = T_x(m_t)$.
Similarly $\mathbb E_{q_t}[D_cU] = D_c m_t = m_t^\Delta(c)$. The mean
vector is therefore $\bigl(T_x(m_t), m_t^\Delta(c)\bigr)$.

\paragraph{Variance $\sigma_x^2$} Write $G(z) = D_{x,z}U$ and
$\widetilde G(z) = G(z) - \mathbb E_{q_t}[G(z)]$, so that
\[
\Psi_x - \mathbb E_{q_t}[\Psi_x] = \int_{\mathcal Y^2} \alpha_t(z)\,\widetilde G(z)\,\nu_x(dz).
\]
Then
\begin{equation*}
\begin{aligned}
\sigma_x^2 = \operatorname{Var}_{q_t}(\Psi_x)
&= \mathbb E_{q_t}\!\left[\left(\int \alpha_t(z)\,\widetilde G(z)\,\nu_x(dz)\right)\!\left(\int \alpha_t(z')\,\widetilde G(z')\,\nu_x(dz')\right)\right] \\
&= \mathbb E_{q_t}\!\left[\iint \alpha_t(z)\alpha_t(z')\,\widetilde G(z)\widetilde G(z')\,\nu_x(dz)\,\nu_x(dz')\right].
\end{aligned}
\end{equation*}
We exchange $\mathbb E_{q_t}$ and the double $\nu_x$-integral by
Fubini's theorem. This is justified because the integrand is
absolutely integrable with respect to $q_t \otimes \nu_x \otimes
\nu_x$:
\begin{itemize}
\item $\alpha_t$ is uniformly bounded, since $\alpha_t(z) =
\varphi\bigl(m_t^\Delta(x,z)/\lambda\bigr)/\lambda \le 1/(\lambda\sqrt{2\pi})
=: \bar\alpha$ for every $z$;
\item by Cauchy--Schwarz in $L^2(q_t)$ and the bound
\Cref{eq:contrast-var-bound},
\[
\mathbb E_{q_t}\bigl[|\widetilde G(z)\widetilde G(z')|\bigr]
\le \sqrt{\operatorname{Var}_{q_t}(G(z))\,\operatorname{Var}_{q_t}(G(z'))}
= \sqrt{k_t^\Delta(z,z)\,k_t^\Delta(z',z')} \le 4K_t;
\]
\item $\nu_x \otimes \nu_x$ is a probability measure on $\mathcal
Y^2 \times \mathcal Y^2$.
\end{itemize}
The integrand is therefore bounded in absolute value by the constant
$\bar\alpha^2 \cdot 4K_t$, which is integrable against the finite
measure $q_t \otimes \nu_x \otimes \nu_x$. Fubini's theorem applies,
so we may exchange the order of integration:
\[
\sigma_x^2 = \iint \alpha_t(z)\alpha_t(z')\,\mathbb E_{q_t}\bigl[\widetilde G(z)\widetilde G(z')\bigr]\,\nu_x(dz)\,\nu_x(dz')
= \iint \alpha_t(z)\alpha_t(z')\,\operatorname{Cov}_{q_t}\bigl(D_{x,z}U, D_{x,z'}U\bigr)\,\nu_x(dz)\,\nu_x(dz').
\]
By definition of $k_t^\Delta$ ,
$\operatorname{Cov}_{q_t}(D_{x,z}U, D_{x,z'}U) = k_t^\Delta(z,z')$.

\paragraph{Covariance $\sigma_{x,c}$.} By the same argument,
\[
\sigma_{x,c} = \operatorname{Cov}_{q_t}(\Psi_x, D_cU)
= \operatorname{Cov}_{q_t}\!\left(\int \alpha_t(z)\,D_{x,z}U\,\nu_x(dz),\; D_cU\right).
\]
The exchange of $\operatorname{Cov}_{q_t}$ and the $\nu_x$-integral
is justified as above, now for a single integral: $|\alpha_t(z)| \le
\bar\alpha$ and $|\operatorname{Cov}_{q_t}(D_{x,z}U, D_cU)| \le
\sqrt{k_t^\Delta(z,z)\,\sigma_c^2} \le \sqrt{4K_t \sigma_c^2}$ by
Cauchy--Schwarz, a constant, and $\nu_x$ is a probability measure.
Hence
\[
\sigma_{x,c} = \int \alpha_t(z)\,\operatorname{Cov}_{q_t}(D_{x,z}U, D_cU)\,\nu_x(dz)
= \int \alpha_t(z)\,k_t^\Delta(z,c)\,\nu_x(dz),
\].

\paragraph{Variance $\sigma_c^2$} Immediate from the definition of
$k_t^\Delta$: $\sigma_c^2 = \operatorname{Var}_{q_t}(D_cU) =
k_t^\Delta(c,c)$.

This establishes the covariance matrix $\Sigma$.

\subsection{Two properties}
\label{app:closed-form-lemmas}

\begin{lemma}[Gaussian conditioning]
\label{lem:gaussian-conditioning}
Let $(A,B)$ be jointly Gaussian with means $\mu_A, \mu_B$, variances
$\sigma_A^2, \sigma_B^2 > 0$, and covariance $\sigma_{AB}$. Then
\[
B \mid A = a \;\sim\; \mathcal N\!\left(\mu_B + \frac{\sigma_{AB}}{\sigma_A^2}(a - \mu_A),\;\; \sigma_B^2 - \frac{\sigma_{AB}^2}{\sigma_A^2}\right).
\]
\end{lemma}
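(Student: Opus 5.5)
The plan is to reduce the statement to the bivariate Gaussian case by a linear change of variables. I would start by defining the centred and standardised variables
\[
Z_A=(A-\mu_A)/\sigma_A, \qquad W=B-\mu_B-\frac{\sigma_{AB}}{\sigma_A^2}(A-\mu_A).
\]
The pair $(Z_A,W)$ is an affine image of the jointly Gaussian pair $(A,B)$, so it is itself jointly Gaussian. Its covariance is a direct computation:
\[
\operatorname{Cov}(A,W)=\sigma_{AB}-\frac{\sigma_{AB}}{\sigma_A^2}\sigma_A^2=0 .
\]
For jointly Gaussian variables, zero covariance implies independence. So $W$ is independent of $A$.

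Next I compute the law of $W$. Its mean is $0$. Expanding the variance gives
\[
\operatorname{Var}(W)=\sigma_B^2-2\frac{\sigma_{AB}^2}{\sigma_A^2}+\frac{\sigma_{AB}^2}{\sigma_A^4}\sigma_A^2=\sigma_B^2-\frac{\sigma_{AB}^2}{\sigma_A^2}.
\]
This quantity is nonnegative by Cauchy--Schwarz. Hence $W\sim\mathcal N\bigl(0,\sigma_B^2-\sigma_{AB}^2/\sigma_A^2\bigr)$, where the degenerate case of zero variance is read as a point mass.

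Finally I write
\[
B=\mu_B+\frac{\sigma_{AB}}{\sigma_A^2}(A-\mu_A)+W .
\]
Conditioning on $A=a$ fixes the first two terms. Because $W$ is independent of $A$, its law is unchanged by the conditioning. The conditional law of $B$ given $A=a$ is therefore Gaussian with the stated mean and variance. To make the conditioning rigorous rather than only heuristic, I would express it through regular conditional distributions. For bounded measurable $\phi$, independence gives
\[
\mathbb E[\phi(B)\mid A]=\int\phi\Bigl(\mu_B+\tfrac{\sigma_{AB}}{\sigma_A^2}(A-\mu_A)+w\Bigr)\,P_W(dw)
\]
almost surely. This identifies the conditional law for $P_A$-almost every $a$.

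Since $\sigma_A^2>0$, $A$ has a density that is positive everywhere. I would then pick the continuous version of the conditional law, which holds for every $a$. This last step, choosing a version defined for all $a$ and not only almost every $a$, is the only delicate point; the rest is routine algebra.
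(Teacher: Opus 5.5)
Your proof is correct and is essentially the paper's argument: the paper forms the residual $B' = B - \frac{\sigma_{AB}}{\sigma_A^2}(A-\mu_A)$ (uncentred, with mean $\mu_B$). It shows $\operatorname{Cov}(A,B')=0$, deduces independence from joint Gaussianity, computes $\operatorname{Var}(B') = \sigma_B^2 - \sigma_{AB}^2/\sigma_A^2$, and conditions on $A=a$ exactly as you do. Your centring of $W$, the auxiliary $Z_A$ (which you never actually use), and your remark on taking the continuous version of the regular conditional law are harmless refinements that the paper leaves out.
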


\begin{proof}
Define $B' = B - \frac{\sigma_{AB}}{\sigma_A^2}(A - \mu_A)$, a linear
combination of $A$ and $B$, hence jointly Gaussian with $A$. Its
covariance with $A$ is
\[
\operatorname{Cov}(A, B') = \sigma_{AB} - \frac{\sigma_{AB}}{\sigma_A^2}\operatorname{Var}(A)
= \sigma_{AB} - \sigma_{AB} = 0.
\]
For jointly Gaussian random variables, zero covariance implies
independence, so $B' \perp A$. Its mean and variance are
\[
\mathbb E[B'] = \mu_B, \qquad
\operatorname{Var}(B') = \sigma_B^2 - 2\frac{\sigma_{AB}}{\sigma_A^2}\sigma_{AB} + \frac{\sigma_{AB}^2}{\sigma_A^4}\sigma_A^2
= \sigma_B^2 - \frac{\sigma_{AB}^2}{\sigma_A^2},
\]
so $B' \sim \mathcal N\bigl(\mu_B,\, \sigma_B^2 - \sigma_{AB}^2/\sigma_A^2\bigr)$,
independently of $A$. Since $B = B' + \frac{\sigma_{AB}}{\sigma_A^2}(A - \mu_A)$,
conditioning on $A = a$ leaves $B'$'s law unchanged (by independence)
and shifts it by the deterministic term $\frac{\sigma_{AB}}{\sigma_A^2}(a-\mu_A)$,
giving the stated conditional law.
\end{proof}

\begin{lemma}[Probit--Gaussian integral identity]
\label{lem:probit-gaussian}
For $W \sim \mathcal N(\mu, s^2)$ and $\lambda > 0$,
\[
\mathbb E\bigl[\Phi(W/\lambda)\bigr] = \Phi\!\left(\frac{\mu}{\sqrt{\lambda^2 + s^2}}\right).
\]
\end{lemma}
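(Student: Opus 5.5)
The plan is to realise $\Phi(W/\lambda)$ as a probability involving an auxiliary independent Gaussian, and then use the closure of Gaussians under linear combinations. Let $\varepsilon \sim \mathcal N(0,1)$ be independent of $W$. For every fixed $w$, $\Phi(w/\lambda) = \mathbb P(\varepsilon \le w/\lambda) = \mathbb P(\lambda\varepsilon \le w)$. Conditioning on $W$ and applying the tower property (Fubini is harmless here, since the integrand is an indicator bounded by one) then gives
\[
\mathbb E\bigl[\Phi(W/\lambda)\bigr] = \mathbb E\bigl[\mathbb P(\lambda\varepsilon \le W \mid W)\bigr] = \mathbb P\bigl(\lambda\varepsilon - W \le 0\bigr).
\]

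Next we identify the law of $\lambda\varepsilon - W$. Because $\varepsilon$ and $W$ are independent Gaussians, their difference $\lambda\varepsilon - W$ is Gaussian with mean $-\mu$ and variance $\lambda^2 + s^2 > 0$, which is positive since $\lambda>0$. Standardising, we obtain
\[
\mathbb P(\lambda\varepsilon - W \le 0) = \Phi\!\left(\frac{\mu}{\sqrt{\lambda^2+s^2}}\right),
\]
which is the claim. The degenerate case $s=0$ is consistent with this formula and needs no separate treatment: then $W=\mu$ almost surely, and the identity reads $\Phi(\mu/\lambda)$ on both sides.

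I do not expect a real obstacle. The only point that needs care is justifying the exchange of expectation and probability in the first step, and this follows from Tonelli's theorem applied to the nonnegative jointly measurable indicator $\mathbf 1\{\lambda\varepsilon \le W\}$.

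There is an alternative route that avoids the auxiliary variable. One can differentiate $F(\mu)=\mathbb E[\Phi((\mu+sZ)/\lambda)]$ in $\mu$ and recognise the result as the convolution of two Gaussian densities. This is more computational, so I would keep the latent-variable argument as the main proof.
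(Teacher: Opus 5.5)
Your proof is correct and follows essentially the same route as the paper, which also introduces an independent standard normal $Y$, writes $\Phi(W/\lambda)=\mathbb P(\lambda Y\le W\mid W)$, applies the tower property, and standardises $W-\lambda Y\sim\mathcal N(\mu,s^2+\lambda^2)$. The only differences are cosmetic: you work with $\lambda\varepsilon-W\le 0$ directly, so you skip the paper's final use of $1-\Phi(-t)=\Phi(t)$, and you add an explicit Tonelli justification and a remark on the case $s=0$.
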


\begin{proof}
Let $Y \sim \mathcal N(0,1)$ be independent of $W$. Since $\Phi$ is
the standard normal cdf, $\Phi(w/\lambda) = \mathbb P(Y \le w/\lambda)
= \mathbb P(\lambda Y \le w)$ for any fixed $w$, using $\lambda > 0$.
Hence, conditioning on $W$ and using the tower property,
\[
\mathbb E\bigl[\Phi(W/\lambda)\bigr] = \mathbb E\bigl[\mathbb P(\lambda Y \le W \mid W)\bigr] = \mathbb P(\lambda Y \le W) = \mathbb P(W - \lambda Y \ge 0).
\]
Since $W$ and $Y$ are independent Gaussians, $W - \lambda Y \sim
\mathcal N(\mu, s^2 + \lambda^2)$. Standardising,
\[
\mathbb P(W - \lambda Y \ge 0) = \mathbb P\!\left(Z \ge \frac{-\mu}{\sqrt{s^2+\lambda^2}}\right) = 1 - \Phi\!\left(\frac{-\mu}{\sqrt{s^2+\lambda^2}}\right) = \Phi\!\left(\frac{\mu}{\sqrt{\lambda^2+s^2}}\right),
\]
for $Z \sim \mathcal N(0,1)$, using the antisymmetry $1 - \Phi(-t) = \Phi(t)$.
\end{proof}

\subsection{Closed-form answer probabilities}
\label{app:closed-form-probabilities}

Recall $V_c \mid U \sim \mathrm{Bernoulli}\bigl(\Phi(D_cU/\lambda)\bigr)$
from \Cref{eq:likelihood_w-u}, so $V_c$'s conditional law given $U$ depends on
$U$ only through $D_cU$. Since $\Psi_x$ is a deterministic (affine)
function of $U$, $\sigma(\Psi_x) \subset \sigma(U)$, and the tower
property gives, for any $\psi$,
\[
\eta_{t,x,c}(\psi) = \mathbb P(V_c = 1 \mid \Psi_x = \psi)
= \mathbb E\bigl[\mathbb P(V_c = 1 \mid U) \mid \Psi_x = \psi\bigr]
= \mathbb E\bigl[\Phi(D_cU/\lambda) \mid \Psi_x = \psi\bigr].
\]

\paragraph{Computing $\eta_{t,x,c}(\psi)$} By \ref{app:closed-form},
$(\Psi_x, D_cU)$ is jointly Gaussian with the mean and covariance
derived above. Applying Lemma~\ref{lem:gaussian-conditioning} with
$A = \Psi_x$, $B = D_cU$,
\[
D_cU \mid \Psi_x = \psi \;\sim\; \mathcal N\!\left(m_t^\Delta(c) + \frac{\sigma_{x,c}}{\sigma_x^2}\bigl(\psi - T_x(m_t)\bigr),\;\; \sigma_c^2 - \frac{\sigma_{x,c}^2}{\sigma_x^2}\right).
\]
Writing $\tau_{x,c} = \sigma_{x,c}/\sigma_x$, so that
$\sigma_{x,c}/\sigma_x^2 = \tau_{x,c}/\sigma_x$ and
$\sigma_{x,c}^2/\sigma_x^2 = \tau_{x,c}^2$, this conditional law is
$\mathcal N\bigl(\mu_\psi, s^2\bigr)$ with
\[
\mu_\psi = m_t^\Delta(c) + \frac{\tau_{x,c}}{\sigma_x}\bigl(\psi - T_x(m_t)\bigr),
\qquad s^2 = \sigma_c^2 - \tau_{x,c}^2.
\]
Applying Lemma~\ref{lem:probit-gaussian} conditionally, with $W =
D_cU \mid \Psi_x = \psi$,
\begin{equation}
\begin{aligned} \label{eq:eta}
\eta_{t,x,c}(\psi) &= \Phi\!\left(\frac{\mu_\psi}{\sqrt{\lambda^2 + s^2}}\right)
= \Phi\!\left(\frac{m_t^\Delta(c) + \tau_{x,c}(\psi - T_x(m_t))/\sigma_x}{\sqrt{\lambda^2 + \sigma_c^2 - \tau_{x,c}^2}}\right) \\
&= \Phi\!\left(\frac{m_t^\Delta(c) + \tau_{x,c}(\psi - T_x(m_t))/\sigma_x}{s_{x,c}}\right),
\end{aligned}
\end{equation}
where $s_{x,c}^2 := \lambda^2 + \sigma_c^2 - \tau_{x,c}^2$.

\paragraph{Computing $p_{t,c}$} By \ref{app:closed-form}, the marginal law of
$D_cU$ is $\mathcal N\bigl(m_t^\Delta(c), \sigma_c^2\bigr)$. Since
$V_c \mid U$ depends on $U$ only through $D_cU$,
\[
p_{t,c} = \mathbb P(V_c = 1) = \mathbb E\bigl[\Phi(D_cU/\lambda)\bigr].
\]
Applying Lemma~\ref{lem:probit-gaussian} with $W = D_cU$, $\mu =
m_t^\Delta(c)$, $s^2 = \sigma_c^2$,
\begin{equation}
\label{eq:p_tc}
p_{t,c} = \Phi\!\left(\frac{m_t^\Delta(c)}{\sqrt{\lambda^2 + \sigma_c^2}}\right).
\end{equation}

\subsection{Gauss--Hermite quadrature and the final closed form}
\label{app:closed-form-quadrature}

Remind that
\begin{equation}
\label{eq:eig-answer-side}
\mathrm{EIG}_{\Psi(x)}(c) = \mathbb{H}[V_c \mid \mathcal{D}_t, c] -
\mathbb{E}_{\Psi_x \sim \Pi_{t,x}}\bigl[\mathbb{H}[V_c \mid \Psi_x,
\mathcal{D}_t, c]\bigr].
\end{equation}

By \Cref{eq:eig-answer-side} and the reduction to binary entropies,
\[
\mathrm{EIG}_{\Psi(x)}(c) = h(p_{t,c}) - \mathbb E_{\Psi_x \sim \Pi_{t,x}}\bigl[h(\eta_{t,x,c}(\Psi_x))\bigr].
\]
Under the linearisation of Theorem~\ref{thm:linearisation}, $\Psi_x$
is Gaussian under $q_t$ with mean $T_x(m_t)$ and variance $\sigma_x^2$, so $\Pi_{t,x} = \mathcal N(T_x(m_t), \sigma_x^2)$ has density
\[
f_{\Psi_x}(\psi) = \frac{1}{\sigma_x}\,\varphi\!\left(\frac{\psi - T_x(m_t)}{\sigma_x}\right), \qquad \varphi(u) = \frac{1}{\sqrt{2\pi}}e^{-u^2/2},
\]
so that
\[
\mathbb E_{\Psi_x \sim \Pi_{t,x}}\bigl[h(\eta_{t,x,c}(\Psi_x))\bigr]
= \int_{-\infty}^{\infty} h(\eta_{t,x,c}(\psi))\,\frac{1}{\sigma_x}\,\varphi\!\left(\frac{\psi - T_x(m_t)}{\sigma_x}\right)\,d\psi.
\]

Substitute $\beta = \frac{1}{\sqrt2}\cdot\frac{\psi - T_x(m_t)}{\sigma_x}$, i.e.\ $\psi = T_x(m_t) + \sqrt2\,\beta\,\sigma_x$ and $d\psi = \sqrt2\,\sigma_x\,d\beta$.
Then $\frac{1}{\sigma_x}\varphi\bigl(\tfrac{\psi-T_x(m_t)}{\sigma_x}\bigr)\,d\psi
= \frac{1}{\sigma_x}\cdot\frac{1}{\sqrt{2\pi}}e^{-\beta^2}\cdot\sqrt2\,\sigma_x\,d\beta
= \frac{1}{\sqrt\pi}\,e^{-\beta^2}\,d\beta$, so
\[
\mathbb E_{\Psi_x \sim \Pi_{t,x}}\bigl[h(\eta_{t,x,c}(\Psi_x))\bigr]
= \int_{-\infty}^{\infty} h\bigl(\eta_{t,x,c}(T_x(m_t) + \sqrt2\,\beta\,\sigma_x)\bigr)\,\frac{1}{\sqrt\pi}\,e^{-\beta^2}\,d\beta.
\]

Gauss--Hermite quadrature approximates integrals of the form $\int_{-\infty}^{\infty} e^{-\beta^2} g(\beta)\,d\beta
\approx \sum_{i=1}^n w_i\, g(\beta_i)$, where $(\beta_i)_{i \le n}$ are
the roots of the degree-$n$ Hermite polynomial and $(w_i)_{i \le n}$
the corresponding weights \citep{liu_Note_1994}. Applying this with
$g(\beta) = h\bigl(\eta_{t,x,c}(T_x(m_t) + \sqrt2\,\beta\,\sigma_x)\bigr)$,
\[
\mathbb E_{\Psi_x \sim \Pi_{t,x}}\bigl[h(\eta_{t,x,c}(\Psi_x))\bigr]
\approx \sum_{i=1}^n \frac{w_i}{\sqrt\pi}\,h\bigl(\eta_{t,x,c}(T_x(m_t) + \sqrt2\,\beta_i\,\sigma_x)\bigr).
\]

From \Cref{eq:eta}, at $\psi = T_x(m_t) + \sqrt2\,\beta_i\,\sigma_x$, the
term $\bigl(\psi - T_x(m_t)\bigr)/\sigma_x$ simplifies to
$\sqrt2\,\beta_i$, so
\[
\eta_{t,x,c}\bigl(T_x(m_t) + \sqrt2\,\beta_i\,\sigma_x\bigr) = \Phi\!\left(\frac{m_t^\Delta(c) + \sqrt2\,\beta_i\,\tau_{x,c}}{s_{x,c}}\right).
\]

Substituting back,
\[
\mathrm{EIG}_{\Psi(x)}(c) \approx h(p_{t,c}) - \sum_{i=1}^n \frac{w_i}{\sqrt\pi}\,h\!\left(\Phi\!\left(\frac{m_t^\Delta(c) + \sqrt2\,\beta_i\,\tau_{x,c}}{s_{x,c}}\right)\right),
\]
which is \Cref{eq:quadrature}.


\section{Targeting the right information}
\label{app:example}

By Equation \ref{eq:linearisation}, the linearised estimand is $\Psi_x = c_{t,x} + Z_x$, where $c_{t,x}$ is a
constant given $D_t$ and $Z_x = \int \alpha_t(x,z)\,D_{x,z}U\,\nu_x(dz)$ is a single linear
functional of $U$, the $\alpha_t$-weighted average over the target pool of the pairwise
contrasts. Since $\Psi_x$ is a deterministic shift of $Z_x$,
$\mathrm{EIG}_{\Psi(x)}(c) = I(Z_x; V_c \mid D_t, c)$ for every $c$: CPTE-EIG only sees what
a query reveals about this one averaged contrast, never about any individual pair entering
it. A query can be informative about every pair in the pool taken separately, which is
exactly what EPIG rewards, and still be uninformative about their average. The following
example gives such a query, and contrasts it with both EPIG and BALD.

\begin{example}[Target the right information]
\label{ex:cancel}
Let $y=(y^{(1)},y^{(2)})$ be the outcome, where $y^{(1)}$ records mobility and $y^{(2)}$
records pain relief. Model the utility as linear in $y$, $u(\tilde x,y)=\theta_1
y^{(1)}+\theta_2 y^{(2)}$, with $\theta\sim\mathcal N(0,\sigma^2 I)$ under $q_t$: an instance
of the Gaussian process model of Section \ref{sec:model}, using a linear kernel and a posterior mean of
zero. Contrasts are then plain dot products, $\mathrm{Cov}(\theta\cdot d,\theta\cdot
d')=\sigma^2\,d\cdot d'$, for any two directions $d,d'$.

Fix a target $x$ with $\nu_x=\tfrac12\delta_{z_1}+\tfrac12\delta_{z_2}$, two pairs sharing
the query covariate: at $z_1$ the treated outcome gains on both dimensions, with contrast
$d_1=(1,1)$; at $z_2$ it trades mobility for pain relief, with contrast $d_2=(1,-1)$.
Consider a query $c$ with contrast $d_c=(0,1)$, turning entirely on pain relief. Since
$m_t=0$, every posterior mean contrast vanishes, and by Equation \ref{eq:sensitivity} $z_1$ and $z_2$
receive the same sensitivity weight $\alpha_0=\varphi(0)/\lambda$.

The contrast the target pool averages over is $\tfrac12(d_1+d_2)=(1,0)$: the $\theta_2$
component of $z_1$ and $z_2$ cancel, since the two pairs trade mobility against pain relief
in opposite directions. Hence $\sigma_{x,c}=\alpha_0\sigma^2\,(1,0)\cdot(0,1)=0$, while
$\sigma_x^2=\alpha_0^2\sigma^2>0$ and $\sigma_c^2=\sigma^2>0$, so
$\tau_{x,c}=\sigma_{x,c}/\sigma_x=0$. By Equation \ref{eq:eta}, $\eta_{t,x,c}(\psi)=\Phi(0)$ for
every $\psi$, equal to $p_{t,c}$: the answer to $c$ tells us nothing about $\Psi_x$, and
$\mathrm{EIG}_{\Psi(x)}(c)=h(p_{t,c})-h(p_{t,c})=0$ by Equation \ref{eq:eig-answer-side}.

EPIG does not cancel this way, since it scores $z_1$ and $z_2$ separately rather than
through their average: $\mathrm{Cov}(\theta\cdot d_1,\theta\cdot d_c)=\sigma^2$ and
$\mathrm{Cov}(\theta\cdot d_2,\theta\cdot d_c)=-\sigma^2$, both nonzero, so $c$ is
informative about the answer at $z_1$ and at $z_2$ on their own, and $\mathrm{EPIG}(c)>0$.
BALD is positive as well, since $V_c$ depends on $\theta_2$, which has positive prior
variance and is otherwise unconstrained by $D_t$. CPTE-EIG is the only one of the three
that scores $c$ as uninformative: $c$ resolves only the direction $\theta_2$, exactly the
direction the target pool's own trade-off cancels.
\end{example}

\section{Query selection algorithm}
\label{app:algorithm}

  \begin{algorithm}[h!]
  \caption{CPTE-EIG query selection at round $t$}
  \label{alg:cpte-eig}
  \begin{algorithmic}[1]
  \renewcommand{\algorithmicrequire}{\textbf{Input:}}
  \renewcommand{\algorithmicensure}{\textbf{Output:}}
  \REQUIRE Potential-outcome laws $\hat{\mu}_0, \hat{\mu}_1$; target
  distribution $p^\star$; quadrature order $n$; Monte Carlo sizes $M$
  (target draws) and $K$ (outcome pairs per target)
  \ENSURE Next query $c_{t+1}$
  \medskip
  \STATE \textbf{Given:} comparisons $\mathcal{D}_t$, candidate pool
  $\mathcal{C}_t$
  \STATE Fit the Laplace-approximate posterior $q_t = \mathrm{GP}(m_t, k_t)$
  from $\mathcal{D}_t$, with the kernel hyperparameters and the noise
  scale $\hat\lambda_t$ maximising the Laplace marginal likelihood (kernel
  amplitude fixed to $1$)
  \STATE Draw target covariates $x_1^\star, \dots, x_M^\star \sim p^\star$,
  and for each $m$, outcome pairs $z_1^{(m)}, \dots, z_K^{(m)} \sim
  \hat{\mu}_1(\cdot \mid x_m^\star) \otimes \hat{\mu}_0(\cdot \mid
  x_m^\star)$ approximating $\nu_{x_m^\star}$
  \FOR{$m = 1, \dots, M$}
      \STATE Compute sensitivity weights $\alpha_t(x_m^\star,
      z_k^{(m)})$, $k = 1, \dots, K$, via \Cref{eq:sensitivity}, and
      estimate $\sigma_{x_m^\star}^2$ by Monte Carlo over $z_1^{(m)},
      \dots, z_K^{(m)}$, via \Cref{eq:sigma-x}
  \ENDFOR
  \FOR{each candidate query $c \in \mathcal{C}_t$}
      \STATE Compute $m_t^\Delta(c)$, $\sigma_c^2 = k_t^\Delta(c, c)$
      \FOR{$m = 1, \dots, M$}
          \STATE Estimate $\sigma_{x_m^\star, c}$ by Monte Carlo over
          $z_1^{(m)}, \dots, z_K^{(m)}$, via \Cref{eq:sigma-c-xc}
          \STATE Compute $\tau_{x_m^\star, c}$, $s_{x_m^\star, c}$ and,
          using the $n$-point Gauss--Hermite rule, $\mathrm{EIG}_{
          \Psi(x_m^\star)}(c)$ via \Cref{eq:quadrature}
      \ENDFOR
      \STATE $\mathrm{CPTE\text{-}EIG}(c) \gets \frac{1}{M} \sum_{m=1}^M
      \mathrm{EIG}_{\Psi(x_m^\star)}(c)$
  \ENDFOR
  \STATE \textbf{return} $c_{t+1} \in \arg\max_{c \in \mathcal{C}_t}
  \mathrm{CPTE\text{-}EIG}(c)$
  \end{algorithmic}
  \end{algorithm}

\section{Additional experimental details and results}
\label{app:experiments}

\subsection{Data generation}

\textbf{Two datasets.} Both datasets are built from PPMI
\citep{marek_Parkinsons_2018}, but they serve different needs. Eliciting
preferences requires outcome profiles a clinician could plausibly be
shown, so the queries use real PPMI profiles: we take every pair of
distinct observed profiles that share the same query covariate $\tilde x$,
which gives 1,369,284 candidate queries. Of these, 2,000 are held out as
a test set and never queried.

Computing an oracle CPTE, on the other hand, requires both potential
outcomes of the same patient, which no real cohort provides. We therefore
simulate a cohort of 2,000 patients from PPMI. Covariates are drawn from
a multivariate Gaussian fitted to the real baseline covariates and
clipped to their observed range. Treatment is assigned by a logistic
propensity model fitted on the real cohort, so overlap is realistic.
$Y^{(h)}(1)$ and $Y^{(h)}(0)$ are obtained by nearest-neighbour matching
into the real treated (dopamine agonist) and control (levodopa) arms.
No method ever sees these ground-truth quantities.

\textbf{Outcome laws.} For a target patient $x$, let $N_k(x)$ be its
$k=30$ nearest cohort patients under a Gower distance on the full
baseline covariates, excluding $x$ itself. We set
$\hat\mu_a(\cdot\mid x)$ to the empirical distribution of
$\{Y^{(h)}_j(a)\}_{j\in N_k(x)}$. The law $\nu^{(h)}_x=\hat\mu_1\otimes\hat\mu_0$
of \Cref{eq:nu} is then uniform over the $k^2=900$ (treated, control)
pairs, so every integral against $\nu^{(h)}_x$ used for evaluation is
computed exactly by enumeration. The acquisition rules only sample from
$\nu^{(h)}_x$.

\textbf{Follow-up horizons.} Treatment effects in Parkinson's disease are
reported at horizons fixed in advance with clinicians, here
$h\in\{1,3,5,7\}$ years. For each $h$, $Y^{(h)}(0)$ and $Y^{(h)}(1)$ are
the profiles recorded at that horizon, and a separate estimand is
defined from the same covariates. The target distribution $p^\star$ is
uniform over (patient, horizon) pairs, so a single run serves all four
horizons.

\textbf{Treatment effects.} We use three versions of the cohort, which
differ only in $Y(1)$.
\begin{itemize}
\item \emph{Tiny effect:} the matched outcomes above. The two arms have
similar outcome laws. Over the 100 evaluation patients, the mean oracle
CPTE is 0.49--0.56 depending on the horizon.
\item \emph{Mild} ($s=1$) and \emph{strong} ($s=2$) effects: $Y(0)$ is
kept and
\[
Y^{(h)}(1)=\mathrm{clip}\big(Y^{(h)}(0)+s\,\delta(\tilde x,h),\,0,\,4\big),
\]
where $\delta$ is zero on every item except two. Tremor (NP2TRMR)
improves by one point, $\delta_{\text{tremor}}=-1$. Cognition (NP1COG)
worsens by $\delta_{\text{cog}}=(0.5+0.15\,h)(0.5+a)$, with
$a=(\text{age group}-1)/7\in[0,1]$ the normalised age. The harm
therefore grows with the time on treatment and with age.
\end{itemize}
The effect is deterministic given $(\tilde x,h)$ and additive on the
0--4 item scale. Clipping creates a floor: a patient without tremor
cannot improve on it. Both items vary a lot across the query pool, so
the utility weights they depend on can be learned from ordinary
queries. The magnitudes were chosen from properties of the oracle CPTE
only, so that at every horizon some patients benefit and some do not,
never from the performance of any method.

\subsection{The simulated clinician}

Each query is answered as the model of Section~\ref{sec:model} assumes,
$V_c\sim\mathrm{Bernoulli}\big(\Phi(D_c u^\star/\lambda^\star)\big)$, with
$\lambda^\star=0.1\sqrt 2$. The utility is
\[
u^\star(\tilde x,y)=-\sum_{d} w_d(\tilde x)\sum_{j\in G_d}y_j,
\]
with five domains $G_d$. The weights are positive and were drawn once
from fixed ranges:
\begin{itemize}
\item neuropsychiatric (NP1COG, NP1HALL, NP1DPRS, NP1APAT, NP1DDS):
$w=0.38+1.06\,a$, increasing with age;
\item motor (NP2TRMR, NP2WALK, NP2FREZ): $w=0.36+1.20\,(1-a)$,
decreasing with age;
\item motor complications (NP4DYSKI, NP4FLCTI): $w=(0.40,0.64)+1.15\,
\text{sex}$, with sex coded $0/1$;
\item sleep and autonomic (NP1SLPD, NP1LTHD): $w=(0.49,0.38)$;
\item quality of life: $w=1.22$.
\end{itemize}
Since all weights are positive, $u^\star$ is decreasing in every symptom.
Learning $u^\star$ everywhere needs comparisons over all ages and both
sexes, while $\Psi_x$ only depends on the weights at $\tilde x$, because
both profiles of a treated--control pair share the same $\tilde x$.

\subsection{Methods and protocol}

\textbf{Shared model.} All methods use the same model. It is a PairwiseGP from \emph{botorch} library with Laplace approximation \citep{chu_Gaussian_2005}, on inputs rescaled
to $[0,1]$. The kernel is a mixed kernel
$k=\theta\,(k_{\mathrm{RBF}}+k_{\mathrm{cat}})+(1-\theta)\,k_{\mathrm{RBF}}\,k_{\mathrm{cat}}$
\citep{ru2020bayesian}. $k_{\mathrm{RBF}}$ is an ARD RBF kernel on age
group, disease duration and the 13 outcomes, with lengthscales at least
$0.05$, and $k_{\mathrm{cat}}$ is a categorical kernel on sex.
As in \citet{chu_Gaussian_2005}, the kernel amplitude is fixed, which is
needed for $\lambda$ to be identifiable.

\textbf{Noise scale.} $\lambda$ and the kernel amplitude are not separately
identifiable. We therefore fix the output scale to 1 and treat $\lambda\in[10^{-3},10^{3}]$ as a hyperparameter. At every round,
the lengthscales, $\theta$ and $\lambda$ are fitted jointly by maximising
the Laplace marginal likelihood with L-BFGS, starting from $\lambda=1$.
The estimate $\hat\lambda_t$ is then used everywhere: in the likelihood,
in each acquisition score, and in the CPTE estimate. The oracle alone
uses $\lambda^\star=0.1\sqrt{2}$. 

\textbf{Failed fits.} If the optimiser fails, we refit with $\lambda$ held
at the previous round's estimate, and if that also fails, the round's
query is drawn at random from the pool. The same rule applies to every
method and to the final evaluation fit. Across all runs, fewer than 1\%
of fits needed the held-$\lambda$ refit, and a random query was used 10
times in total, spread across methods.

\textbf{Acquisition rules.} We write $V_c$ for the answer to a query
$c$. All methods start from the same 20 random comparisons, so they
coincide at the smallest budget. At each later round, each method scores
2,000 unlabelled comparisons drawn at random from the pool and queries
the best one.
\begin{itemize}
\item \emph{Random} queries a comparison drawn uniformly from the pool.
\item \emph{BALD} \citep{houlsby_Bayesian_2011b} scores the information
the answer brings about the utility, $I(V_c;u\mid\mathcal D_t)$.
We use BoTorch's BALD for pairwise GPs (\texttt{PairwiseBayesianActiveLearningByDisagreement}), with
  the learned $\hat\lambda_t$ in place of its fixed $\lambda=1$, and its Monte Carlo expectation replaced by the
  deterministic 40-node Gauss–Hermite rule used for CPTE-EIG, which removes sampling noise.
\item \emph{EPIG} \citep{smith_PredictionOriented_2023b} scores the
information the answer brings about the answers to target queries. We
give it the treated--control queries behind the CPTE. For a target
$x\sim p^\star$ and a pair $z=(z_1,z_0)\sim\nu_x$, let $c^\star(x,z)$
compare the treated profile $(\tilde x,z_1)$ with the control profile
$(\tilde x,z_0)$. Its answer $V_{c^\star(x,z)}$ says whether the clinician
prefers the treated outcome, and $\Psi_x$ is the average of its
probability over $z\sim\nu_x$. Then
\[
\mathrm{EPIG}(c)=\mathbb E_{x\sim p^\star,\,z\sim\nu_x}
\big[I(V_c;V_{c^\star(x,z)}\mid\mathcal D_t)\big], \
\mathrm{CPTE\text{-}EIG}(c)=\mathbb E_{x\sim p^\star}
\big[I(V_c;\Psi_x\mid\mathcal D_t)\big].
\]
Both criteria use the same targets, drawn from the same law. They differ
only in what they aim to learn about them: each individual answer for
EPIG, and their average $\Psi_x$ for CPTE-EIG. EPIG uses 200 target queries per round, each
from a fresh draw of $x$ and $z$, and estimates each mutual information
with 512 Monte Carlo samples from the joint GP posterior at $c$ and
$c^\star$.
\item \emph{CPTE-EIG} follows Algorithm~\ref{alg:cpte-eig}, with $M=20$
targets $x\sim p^\star$ and $K=20$ pairs $z\sim\nu_x$ per target, drawn at every round, and an $n=40$-node Gauss--Hermite rule.
\end{itemize}

\textbf{Budget and repeats.} Budgets range from 20 to 150 comparisons,
with 15 seeds per budget. After $n$ comparisons, the GP is refitted on
all of them and evaluated.

\subsection{Metrics}

All CPTE metrics use the same 100 evaluation patients, drawn once from
the cohort and shared by all seeds and methods. The estimate is
$T^{(h)}_x(m_t)=\frac{1}{k^2}\sum_{i,j\in N_k(x)}
\Phi\big(m^\Delta_t(x,z_{ij})/\hat\lambda_t\big)$, and the oracle
$\Psi^{(h)}_x$ is the same average with $u^\star$ and $\lambda^\star$,
both over the 900 pairs of $\nu^{(h)}_x$.
\begin{itemize}
\item \emph{CPTE MSE:} the mean of $(T^{(h)}_x-\Psi^{(h)}_x)^2$ over the
100 patients, per horizon.
\item \emph{Policy regret:} $V(\pi^\star)-V(\pi_t)$, where
$V(\pi)=\frac{1}{100}\sum_x\big[\pi(x)\Psi_x+(1-\pi(x))(1-\Psi_x)\big]$
and $\pi_t(x)=\mathbf 1\{T_x(m_t)>1/2\}$. Each wrong decision is
weighted by $|2\Psi_x-1|$, so errors on clear-cut patients cost more.
\item \emph{Held-out prediction:} accuracy, Brier score and negative
log-likelihood of the predicted answers on the 2,000 test comparisons,
against answers simulated from the same clinician.
\end{itemize}
Tables and bands report the mean over the 15 seeds with a 95\%
percentile-bootstrap confidence interval for the mean (3,000 resamples).

\subsection{Tiny and strong treatment effects}

\begin{figure}[t]
\centering
\includegraphics[width=\textwidth]{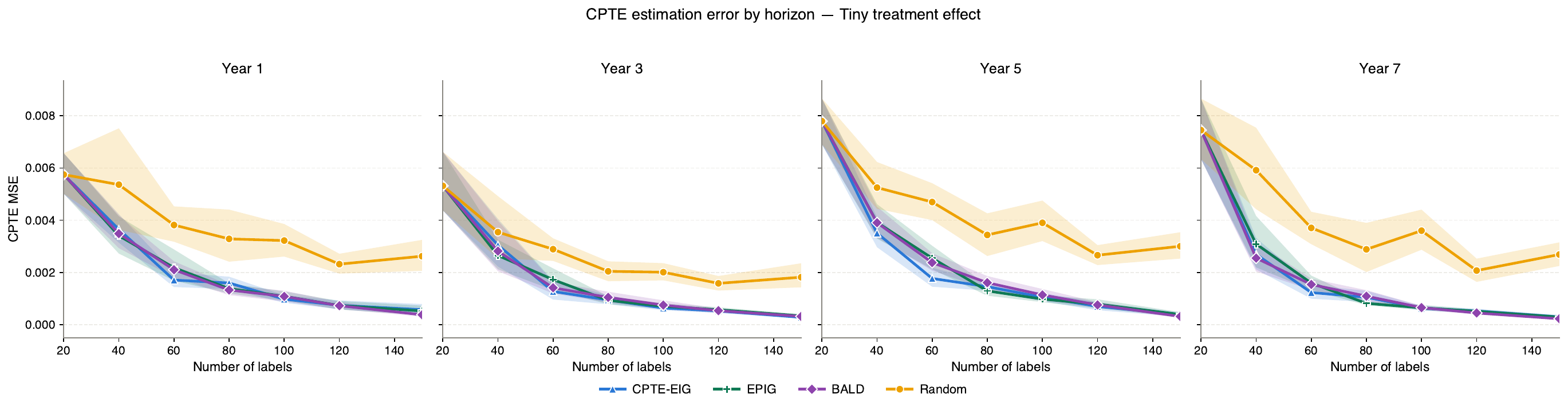}\\
\includegraphics[width=\textwidth]{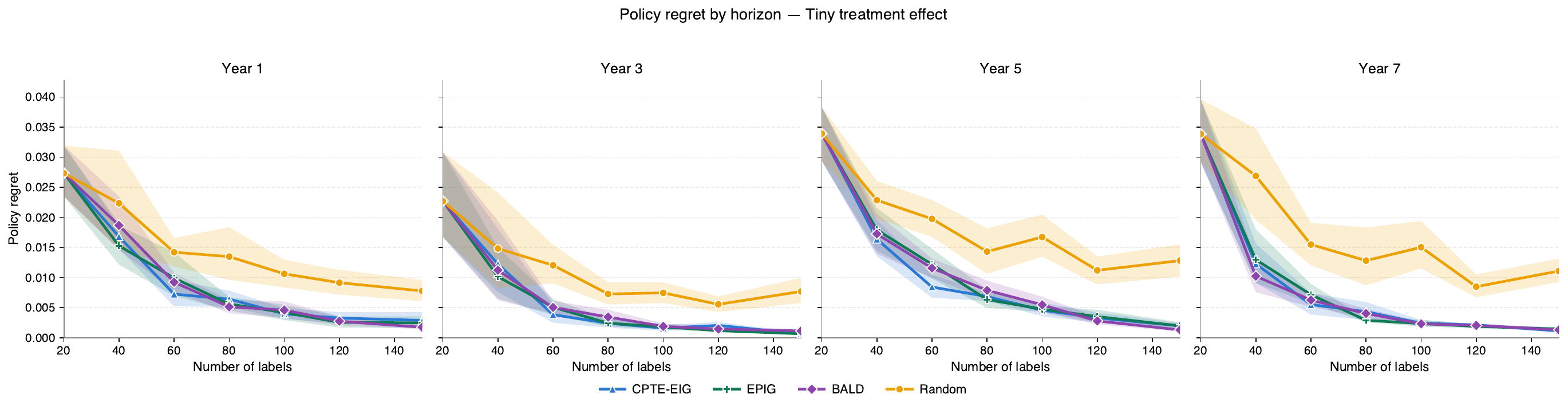}
\caption{Tiny treatment effect. CPTE MSE (top) and policy regret
(bottom) per horizon, as in Figure~\ref{fig:effect-cpte}.}
\label{fig:tiny-cpte}
\end{figure}

\begin{figure}[t]
\centering
\includegraphics[width=\textwidth]{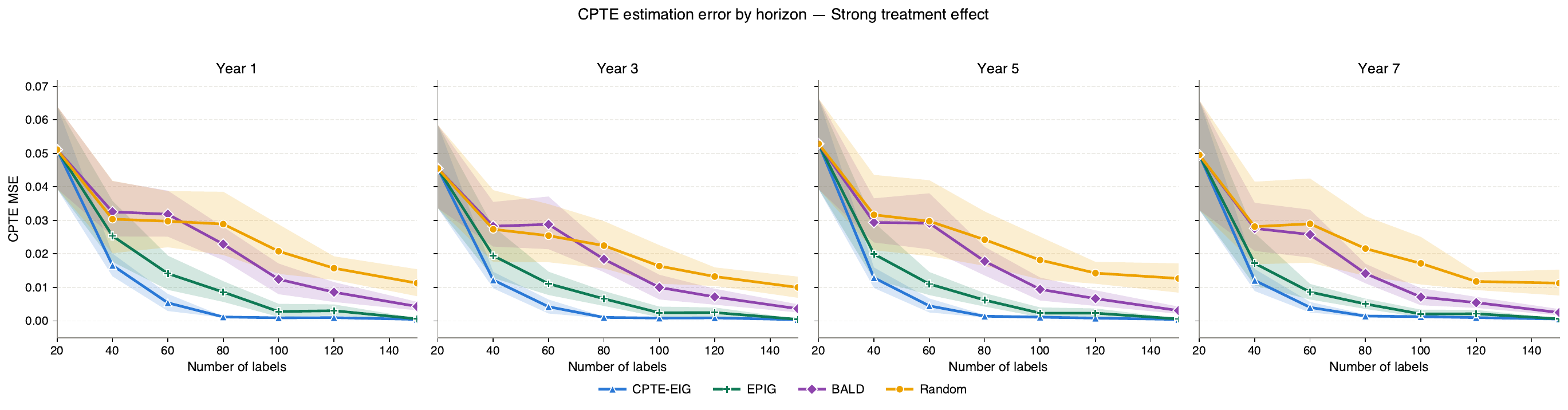}\\
\includegraphics[width=\textwidth]{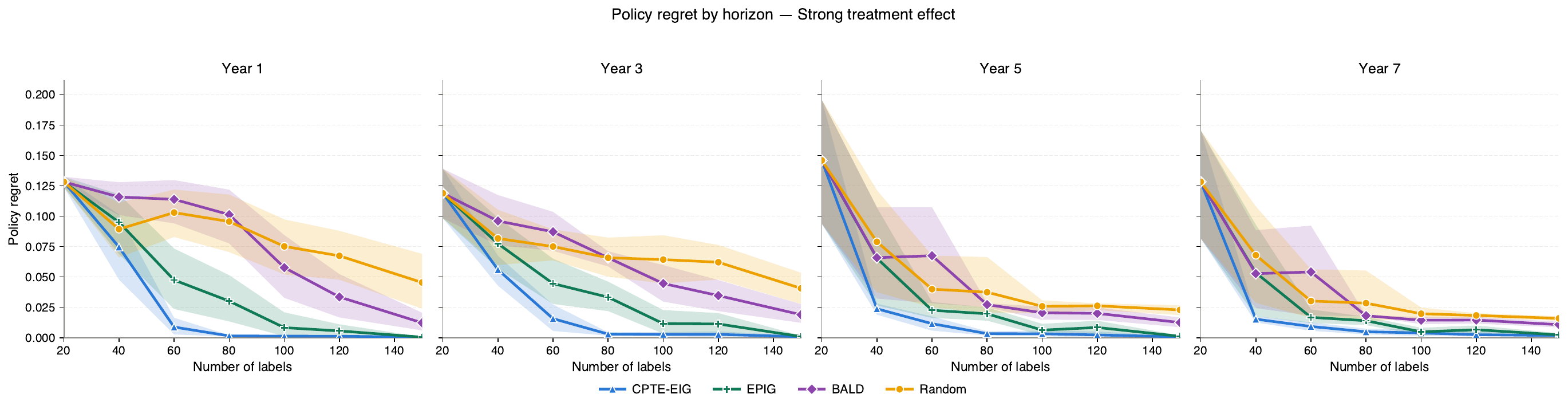}
\caption{Strong treatment effect. CPTE MSE (top) and policy regret
(bottom) per horizon, as in Figure~\ref{fig:effect-cpte}.}
\label{fig:strong-cpte}
\end{figure}

With a strong effect (Figure~\ref{fig:strong-cpte}), the ordering of the
mild case holds at every horizon and every budget from 40 comparisons
on, for both metrics, with wider gaps. At 80 comparisons, CPTE-EIG's
regret is $0.003$, against $0.024$ for EPIG and $0.053$ for BALD. On
CPTE MSE, it beats BALD on at least 14 of the 15 seeds at every budget
from 40 on, and EPIG on 12 of 15 seeds at 80.

With a tiny effect (Figure~\ref{fig:tiny-cpte}), CPTE-EIG, EPIG and
BALD overlap at every horizon. The estimand is easy here: after the 20
initial random comparisons, the CPTE MSE is already $6.6\times10^{-3}$,
against $2.1\times10^{-2}$ with a mild effect and $5.0\times10^{-2}$ with
a strong one. Little is left for any acquisition rule to gain.

\subsection{Learning $w$ is not learning $\Psi_x$}

\begin{figure}[t]
\centering
\includegraphics[width=\textwidth]{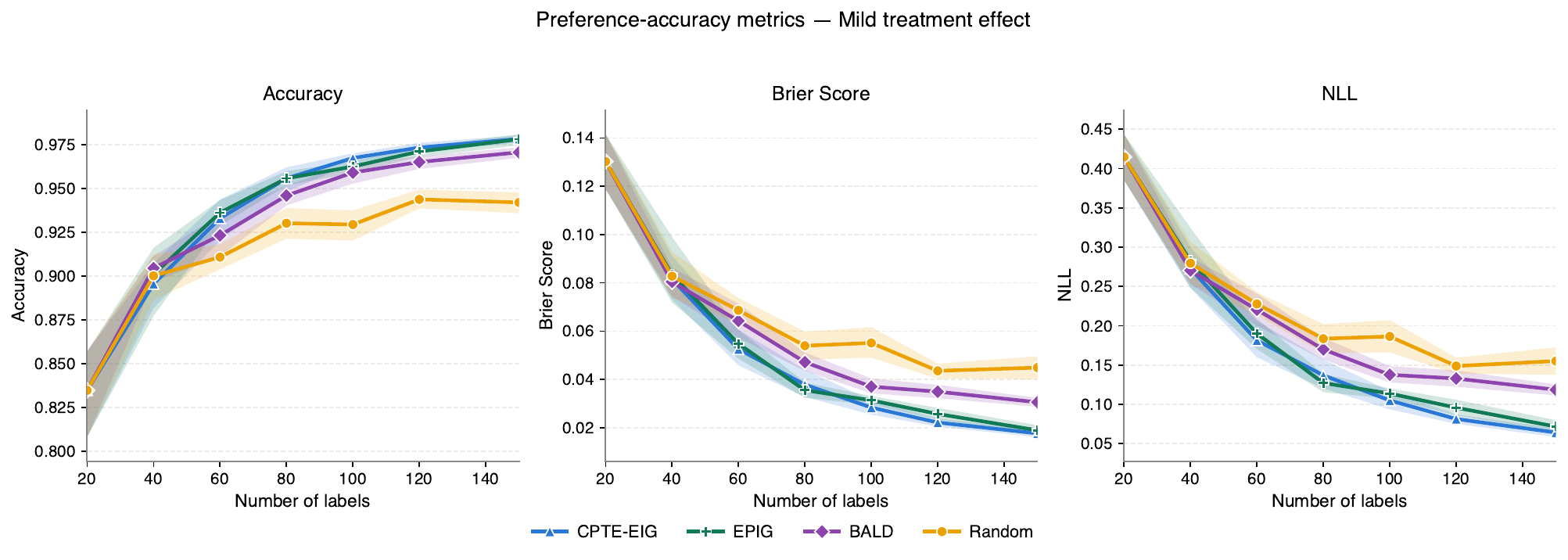}
\caption{Mild treatment effect. Accuracy (higher is better), Brier score
and negative log-likelihood (lower is better) of the predicted answers on
the 2,000 held-out comparisons. These measure the fitted preference rule,
not the estimand.}
\label{fig:pref-metrics}
\end{figure}

Figure~\ref{fig:pref-metrics} reports held-out prediction under the mild
effect. CPTE-EIG and EPIG are within 0.01 accuracy of each other at every
budget, and BALD is at most 0.01 behind them from 80 comparisons on. On
CPTE MSE, the same methods differ by factors of 2 to 5 between 40 and
120 comparisons (Table~\ref{tab:summary}).

BALD and Random never use the outcomes of
the target patients, so their queries and fitted models are identical
under the three treatment effects. We checked that their held-out
metrics coincide exactly across the three runs. Only their CPTE error
changes, because the estimand does. The same fitted utility can thus be
good for one causal question and poor for another. What matters is how
well it is known along the contrasts $\Psi_x$ depends on.

\subsection{Gauss--Hermite quadrature versus Monte Carlo}
\label{app:quadrature-comparison}


Equation \ref{eq:quadrature} approximates the remaining expectation over
$\Psi_x\sim\Pi_{t,x}$ by an $n$-point Gauss--Hermite rule rather than by
direct Monte Carlo sampling of $\Psi_x$. We check that this choice does
not trade accuracy for speed. We rerun CPTE-EIG with four rules for this
one expectation, all else unchanged: Gauss--Hermite with $n=10$ and
$n=40$ nodes, and plain Monte Carlo with 2,000 and 8,000 samples from
$\mathcal N(T_x(m_t),\sigma_x^2)$.

\begin{figure*}[t]
\centering
\includegraphics[width=\textwidth]{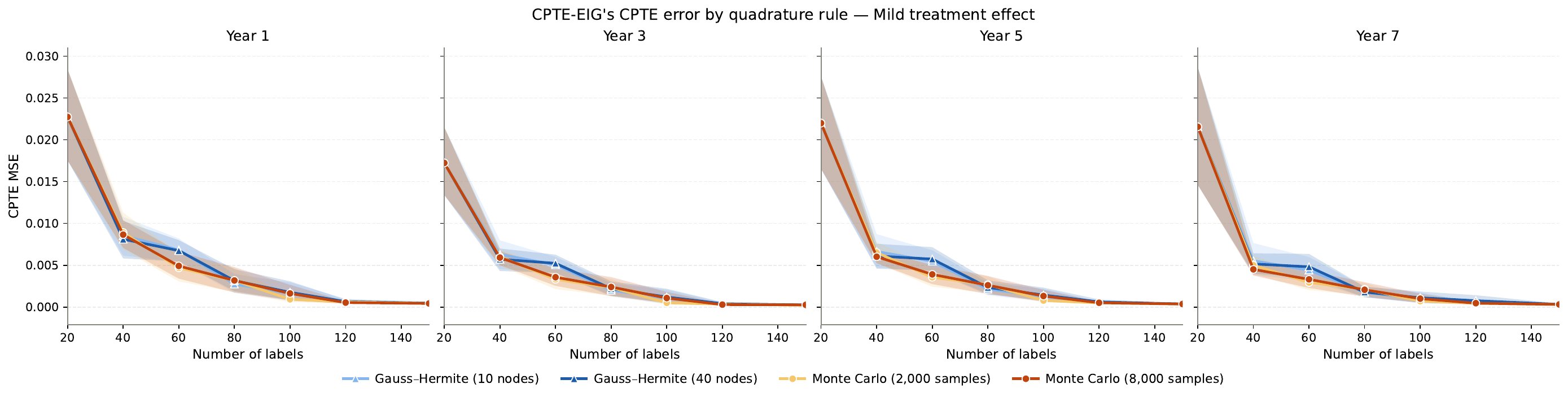}
\caption{CPTE MSE against the oracle, per horizon, for four rules
approximating the expectation in
\Cref{eq:eig-answer-side}--\Cref{eq:quadrature}: Gauss--Hermite
quadrature with $n=10$ and $n=40$ nodes, and Monte Carlo with 2,000 and
8,000 samples of $\Psi_x$.}
\label{fig:quadrature-cpte-mse}
\end{figure*}

\begin{figure}[t]
\centering
\includegraphics[width=0.8\columnwidth]{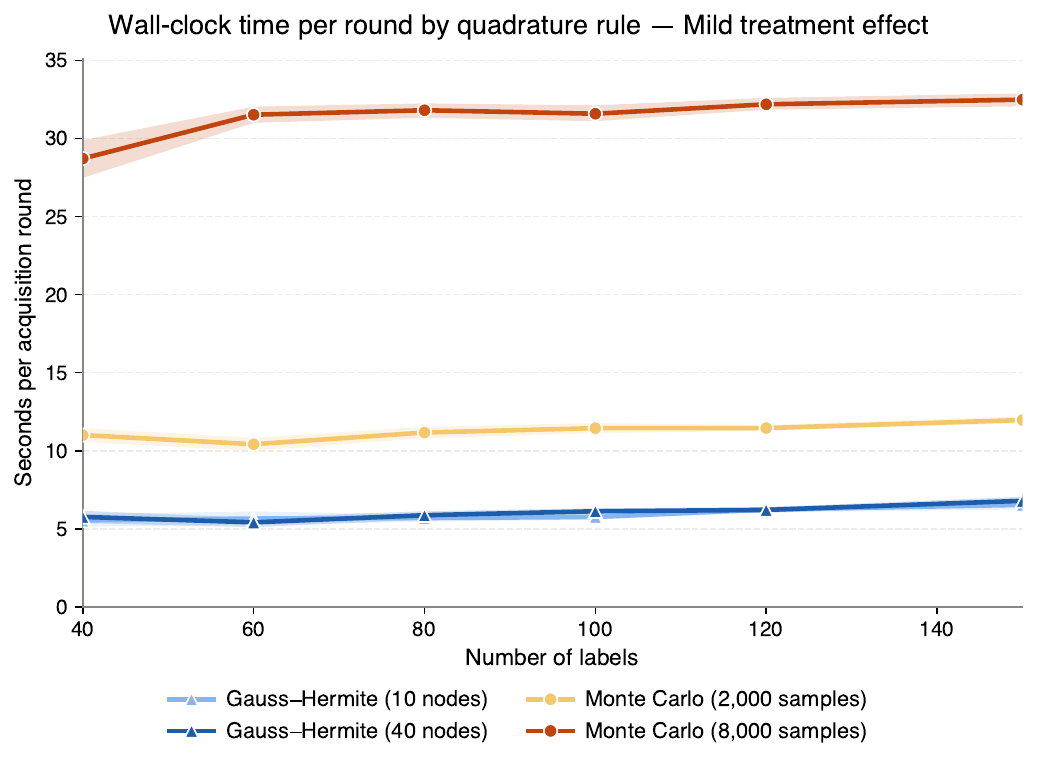}
\caption{Median wall-clock time per acquisition round (log scale), for
the same four rules as Figure~\ref{fig:quadrature-cpte-mse}.}
\label{fig:quadrature-wallclock}
\end{figure}

Figure~\ref{fig:quadrature-cpte-mse} shows the four rules give
statistically indistinguishable CPTE MSE at every budget and horizon:
even $n=10$ nodes match 8,000 Monte Carlo samples. This is expected.
Under the linearisation of Theorem~\ref{thm:linearisation}, $\Pi_{t,x}$
is Gaussian, and Gauss--Hermite quadrature is the natural rule for
integrating against a Gaussian density, so few nodes already suffice.
The two rules differ in cost, however (Figure~\ref{fig:quadrature-wallclock}).
Gauss--Hermite is several times cheaper than Monte Carlo at every budget. We use $n=40$ nodes in all
other experiments: the same accuracy as Monte Carlo, at a fraction of its cost.

\stopcontents[appendices]
\end{appendices}

\end{document}

%% file: math_commands.tex
\usepackage{amsmath,amsfonts,bm}

\def\eqref#1{equation~\ref{#1}}

\def\1{\bm{1}}

\DeclareMathAlphabet{\mathsfit}{\encodingdefault}{\sfdefault}{m}{sl}
\SetMathAlphabet{\mathsfit}{bold}{\encodingdefault}{\sfdefault}{bx}{n}



%% file: iclr2027_conference.bib
@article{butler2018incorporating,
  title={Incorporating patient preferences into estimation of optimal individualized treatment rules},
  author={Butler, Emily L and Laber, Eric B and Davis, Sonia M and Kosorok, Michael R},
  journal={Biometrics},
  volume={74},
  number={1},
  pages={18--26},
  year={2018},
  publisher={Oxford University Press}
}

@article{nie2021quasi,
  title={Quasi-oracle estimation of heterogeneous treatment effects},
  author={Nie, Xinkun and Wager, Stefan},
  journal={Biometrika},
  volume={108},
  number={2},
  pages={299--319},
  year={2021},
  publisher={Oxford University Press}
}

@inproceedings{bengs_Stochastic_2022,
  title = {Stochastic {{Contextual Dueling Bandits}} under {{Linear Stochastic Transitivity Models}}},
  booktitle = {Proceedings of the 39th {{International Conference}} on {{Machine Learning}}},
  author = {Bengs, Viktor and Saha, Aadirupa and H{\"u}llermeier, Eyke},
  year = 2022,
  month = jun,
  pages = {1764--1786},
  publisher = {PMLR},
  issn = {2640-3498},
  urldate = {2026-09-18},
  langid = {english}
}

@article{bradley_Rank_1952,
  title = {Rank {{Analysis}} of {{Incomplete Block Designs}}: {{I}}. {{The Method}} of {{Paired Comparisons}}},
  shorttitle = {Rank {{Analysis}} of {{Incomplete Block Designs}}},
  author = {Bradley, Ralph Allan and Terry, Milton E.},
  year = 1952,
  journal = {Biometrika},
  volume = {39},
  number = {3/4},
  eprint = {2334029},
  eprinttype = {jstor},
  pages = {324--345},
  publisher = {[Oxford University Press, Biometrika Trust]},
  issn = {0006-3444},
  doi = {10.2307/2334029},
  urldate = {2026-09-18}
}

@article{buyse_Generalized_2010,
  title = {Generalized Pairwise Comparisons of Prioritized Outcomes in the Two-Sample Problem},
  author = {Buyse, Marc},
  year = 2010,
  month = dec,
  journal = {Statistics in Medicine},
  volume = {29},
  number = {30},
  pages = {3245--3257},
  issn = {1097-0258},
  doi = {10.1002/sim.3923},
  langid = {english},
  pmid = {21170918}
}

@inproceedings{chau_Learning_2022,
  title = {Learning {{Inconsistent Preferences}} with {{Gaussian Processes}}},
  booktitle = {Proceedings of {{The}} 25th {{International Conference}} on {{Artificial Intelligence}} and {{Statistics}}},
  author = {Chau, Siu Lun and Gonzalez, Javier and Sejdinovic, Dino},
  year = 2022,
  month = may,
  pages = {2266--2281},
  publisher = {PMLR},
  issn = {2640-3498},
  urldate = {2026-09-18},
  langid = {english}
}

@inproceedings{chu2005preference,
  title={Preference learning with Gaussian processes},
  author={Chu, Wei and Ghahramani, Zoubin},
  booktitle={Proceedings of the 22nd international conference on Machine learning},
  pages={137--144},
  year={2005}
}

@article{chu_Gaussian_2005,
  title = {Gaussian {{Processes}} for {{Ordinal Regression}}},
  author = {Chu, Wei and Ghahramani, Zoubin},
  year = 2005,
  journal = {Journal of Machine Learning Research},
  volume = {6},
  number = {35},
  pages = {1019--1041},
  issn = {1533-7928},
  urldate = {2026-09-18}
}

@book{cover_Elements_1991,
  title = {Elements of Information Theory},
  author = {Cover, T. M. and Thomas, Joy A.},
  year = 1991,
  series = {Wiley Series in Telecommunications},
  publisher = {Wiley},
  address = {New York},
  isbn = {978-0-471-06259-2},
  langid = {english},
  lccn = {Q360 .C68 1991}
}

@inproceedings{eric_Active_2007,
  title = {Active {{Preference Learning}} with {{Discrete Choice Data}}},
  booktitle = {Advances in {{Neural Information Processing Systems}}},
  author = {Brochu, Eric and Freitas, Nando and Ghosh, Abhijeet},
  year = 2007,
  volume = {20},
  publisher = {Curran Associates, Inc.},
  urldate = {2026-09-18}
}

@article{even_Rethinking_2026,
  title={Rethinking the win ratio: A causal framework for hierarchical outcome analysis},
  author={Even, Mathieu and Josse, Julie},
  journal={arXiv preprint arXiv:2501.16933},
  year={2025}
}

@book{folland_Real_2013,
  title = {Real {{Analysis}}: {{Modern Techniques}} and {{Their Applications}}},
  shorttitle = {Real {{Analysis}}},
  author = {Folland, Gerald B.},
  year = 2013,
  month = jun,
  publisher = {John Wiley \& Sons},
  googlebooks = {wI4fAwAAQBAJ},
  isbn = {978-1-118-62639-9},
  langid = {english}
}

@article{freemantle_Composite_2003,
  title = {Composite Outcomes in Randomized Trials: Greater Precision but with Greater Uncertainty?},
  shorttitle = {Composite Outcomes in Randomized Trials},
  author = {Freemantle, Nick and Calvert, Melanie and Wood, John and Eastaugh, Joanne and Griffin, Carl},
  year = 2003,
  month = may,
  journal = {JAMA},
  volume = {289},
  number = {19},
  pages = {2554--2559},
  issn = {0098-7484},
  doi = {10.1001/jama.289.19.2554},
  langid = {english},
  pmid = {12759327}
}

@inproceedings{gao_CausalEPIG_2025a,
  title={Causal-EPIG: Causally Aligned Active CATE Estimation},
  author={Gao, Erdun and Fawkes, Jake and Sejdinovic, Dino},
  booktitle={Forty-third International Conference on Machine Learning},
  year=2026
}

@article{goetz_Movement_2008,
  title = {Movement {{Disorder Society-sponsored}} Revision of the {{Unified Parkinson}}'s {{Disease Rating Scale}} ({{MDS-UPDRS}}): Scale Presentation and Clinimetric Testing Results},
  shorttitle = {Movement {{Disorder Society-sponsored}} Revision of the {{Unified Parkinson}}'s {{Disease Rating Scale}} ({{MDS-UPDRS}})},
  author = {Goetz, Christopher G. and Tilley, Barbara C. and Shaftman, Stephanie R. and Stebbins, Glenn T. and Fahn, Stanley and {Martinez-Martin}, Pablo and Poewe, Werner and Sampaio, Cristina and Stern, Matthew B. and Dodel, Richard and Dubois, Bruno and Holloway, Robert and Jankovic, Joseph and Kulisevsky, Jaime and Lang, Anthony E. and Lees, Andrew and Leurgans, Sue and LeWitt, Peter A. and Nyenhuis, David and Olanow, C. Warren and Rascol, Olivier and Schrag, Anette and Teresi, Jeanne A. and {van Hilten}, Jacobus J. and LaPelle, Nancy and {Movement Disorder Society UPDRS Revision Task Force}},
  year = 2008,
  month = nov,
  journal = {Movement Disorders: Official Journal of the Movement Disorder Society},
  volume = {23},
  number = {15},
  pages = {2129--2170},
  issn = {1531-8257},
  doi = {10.1002/mds.22340},
  langid = {english},
  pmid = {19025984}
}

@article{gongora-salazar_Use_2023,
  title = {The {{Use}} of {{Multicriteria Decision Analysis}} to {{Support Decision Making}} in {{Healthcare}}: {{An Updated Systematic Literature Review}}},
  shorttitle = {The {{Use}} of {{Multicriteria Decision Analysis}} to {{Support Decision Making}} in {{Healthcare}}},
  author = {{Gongora-Salazar}, Pamela and Rocks, Stephen and Fahr, Patrick and {Rivero-Arias}, Oliver and Tsiachristas, Apostolos},
  year = 2023,
  month = may,
  journal = {Value in Health},
  volume = {26},
  number = {5},
  pages = {780--790},
  publisher = {Elsevier},
  issn = {1098-3015, 1524-4733},
  doi = {10.1016/j.jval.2022.11.007},
  urldate = {2026-09-18},
  langid = {english},
  pmid = {36436791}
}

@book{hernan_Causal_2025,
  title = {Causal {{Inference}}: {{What If}}},
  shorttitle = {Causal {{Inference}}},
  author = {Hernan, Miguel A. and Robins, James M.},
  year = 2025,
  month = jul,
  publisher = {CRC Press},
  googlebooks = {\_KnHIAAACAAJ},
  isbn = {978-1-4200-7616-5},
  langid = {english}
}

@article{houlsby_Bayesian_2011b,
  title={Bayesian active learning for classification and preference learning},
  author={Houlsby, Neil and Husz{\'a}r, Ferenc and Ghahramani, Zoubin and Lengyel, M{\'a}t{\'e}},
  journal={arXiv preprint arXiv:1112.5745},
  year={2011}
}

@inproceedings{houlsby_Collaborative_2012,
  title = {Collaborative {{Gaussian Processes}} for {{Preference Learning}}},
  booktitle = {Advances in {{Neural Information Processing Systems}}},
  author = {Houlsby, Neil and Huszar, Ferenc and Ghahramani, Zoubin and {Hern{\'a}ndez-lobato}, Jose},
  year = 2012,
  volume = {25},
  publisher = {Curran Associates, Inc.},
  urldate = {2026-09-18}
}

@article{kalia_Parkinsons_2015,
  title = {Parkinson's Disease},
  author = {Kalia, Lorraine V. and Lang, Anthony E.},
  year = 2015,
  month = aug,
  journal = {Lancet},
  volume = {386},
  number = {9996},
  pages = {896--912},
  address = {London, England},
  issn = {1474-547X},
  doi = {10.1016/S0140-6736(14)61393-3},
  langid = {english},
  pmid = {25904081}
}

@article{liu_Note_1994,
  title = {A {{Note}} on {{Gauss-Hermite Quadrature}}},
  author = {Liu, Qing and Pierce, Donald A.},
  year = 1994,
  journal = {Biometrika},
  volume = {81},
  number = {3},
  eprint = {2337136},
  eprinttype = {jstor},
  pages = {624--629},
  publisher = {[Oxford University Press, Biometrika Trust]},
  issn = {0006-3444},
  doi = {10.2307/2337136},
  urldate = {2026-09-18}
}

@article{marek_Parkinsons_2018,
  title = {The {{Parkinson}}'s Progression Markers Initiative ({{PPMI}}) -- Establishing a {{PD}} Biomarker Cohort},
  author = {Marek, Kenneth and Chowdhury, Sohini and Siderowf, Andrew and Lasch, Shirley and Coffey, Christopher S. and Caspell-Garcia, Chelsea and Simuni, Tanya and Jennings, Danna and Tanner, Caroline M. and Trojanowski, John Q. and Shaw, Leslie M. and Seibyl, John and Schuff, Norbert and Singleton, Andrew and Kieburtz, Karl and Toga, Arthur W. and Mollenhauer, Brit and Galasko, Doug and Chahine, Lana M. and Weintraub, Daniel and Foroud, Tatiana and Tosun-Turgut, Duygu and Poston, Kathleen and Arnedo, Vanessa and Frasier, Mark and Sherer, Todd},
  year = 2018,
  month = oct,
  journal = {Annals of Clinical and Translational Neurology},
  volume = {5},
  number = {12},
  pages = {1460--1477},
  issn = {2328-9503},
  doi = {10.1002/acn3.644},
  urldate = {2026-09-18},
  pmcid = {PMC6292383},
  pmid = {30564614}
}

@inproceedings{parnas_Preferencebased_2026b,
  title = {Preference-Based {{Conditional Treatment Effects}} and {{Policy Learning}}},
  booktitle = {Proceedings of {{The}} 29th {{International Conference}} on {{Artificial Intelligence}} and {{Statistics}}},
  author = {Parnas, Dovid and Even, Mathieu and Josse, Julie and Shalit, Uri},
  year = 2026,
  month = aug,
  pages = {226--234},
  publisher = {PMLR},
  issn = {2640-3498},
  urldate = {2026-09-18},
  langid = {english}
}

@article{pocock_win_2012a,
  title = {The Win Ratio: A New Approach to the Analysis of Composite Endpoints in Clinical Trials Based on Clinical Priorities},
  shorttitle = {The Win Ratio},
  author = {Pocock, Stuart J. and Ariti, Cono A. and Collier, Timothy J. and Wang, Duolao},
  year = 2012,
  month = jan,
  journal = {European Heart Journal},
  volume = {33},
  number = {2},
  pages = {176--182},
  issn = {1522-9645},
  doi = {10.1093/eurheartj/ehr352},
  langid = {english},
  pmid = {21900289}
}

@book{rasmussen_Gaussian_2005a,
  title = {Gaussian {{Processes}} for {{Machine Learning}}},
  author = {Rasmussen, Carl Edward and Williams, Christopher K. I.},
  year = 2005,
  month = nov,
  publisher = {The MIT Press},
  doi = {10.7551/mitpress/3206.001.0001},
  urldate = {2026-09-18},
  isbn = {978-0-262-25683-4},
  langid = {english}
}

@incollection{robert_Monte_2011,
  title = {Monte {{Carlo Methods}} in {{Statistics}}},
  booktitle = {International {{Encyclopedia}} of {{Statistical Science}}},
  author = {Robert, Christian},
  year = 2011,
  pages = {854--858},
  publisher = {Springer, Berlin, Heidelberg},
  doi = {10.1007/978-3-642-04898-2_376},
  urldate = {2026-09-18},
  isbn = {978-3-642-04898-2},
  langid = {english}
}

@article{rosenbaum_Central_1983,
  title = {The {{Central Role}} of the {{Propensity Score}} in {{Observational Studies}} for {{Causal Effects}}},
  author = {Rosenbaum, Paul R. and Rubin, Donald B.},
  year = 1983,
  journal = {Biometrika},
  volume = {70},
  number = {1},
  eprint = {2335942},
  eprinttype = {jstor},
  pages = {41--55},
  publisher = {[Oxford University Press, Biometrika Trust]},
  issn = {0006-3444},
  doi = {10.2307/2335942},
  urldate = {2026-09-18}
}

@article{rubin_Randomization_1980,
  title = {Randomization {{Analysis}} of {{Experimental Data}}: {{The Fisher Randomization Test Comment}}},
  shorttitle = {Randomization {{Analysis}} of {{Experimental Data}}},
  author = {Rubin, Donald B.},
  year = 1980,
  journal = {Journal of the American Statistical Association},
  volume = {75},
  number = {371},
  eprint = {2287653},
  eprinttype = {jstor},
  pages = {591--593},
  publisher = {[American Statistical Association, Taylor \& Francis, Ltd.]},
  issn = {0162-1459},
  doi = {10.2307/2287653},
  urldate = {2026-09-18}
}

@inproceedings{smith_PredictionOriented_2023b,
  title={Prediction-oriented bayesian active learning},
  author={Smith, Freddie Bickford and Kirsch, Andreas and Farquhar, Sebastian and Gal, Yarin and Foster, Adam and Rainforth, Tom},
  booktitle={International conference on artificial intelligence and statistics},
  pages={7331--7348},
  year={2023},
  organization={PMLR}
}

@article{thurstone_law_1927,
  title = {A Law of Comparative Judgment},
  author = {Thurstone, L. L.},
  year = 1927,
  journal = {Psychological Review},
  volume = {34},
  number = {4},
  pages = {273--286},
  publisher = {Psychological Review Company},
  address = {US},
  issn = {1939-1471},
  doi = {10.1037/h0070288}
}

@article{williams_Bayesian_1998,
  title = {Bayesian Classification with {{Gaussian}} Processes},
  author = {Williams, C.K.I. and Barber, D.},
  year = 1998,
  month = dec,
  journal = {IEEE Transactions on Pattern Analysis and Machine Intelligence},
  volume = {20},
  number = {12},
  pages = {1342--1351},
  issn = {01628828},
  doi = {10.1109/34.735807},
  urldate = {2026-09-18},
  copyright = {https://ieeexplore.ieee.org/Xplorehelp/downloads/license-information/IEEE.html},
  langid = {english}
}

@article{petit2025optimal,
  title={Optimal treatment regimes for the net benefit of a treatment},
  author={Petit, Fran{\c{c}}ois and Biau, G{\'e}rard and Porcher, Rapha{\"e}l},
  journal={arXiv preprint arXiv:2503.22580},
  year={2025}
}

@article{rubin1974estimating,
  title={Estimating causal effects of treatments in randomized and nonrandomized studies.},
  author={Rubin, Donald B},
  journal={Journal of educational Psychology},
  volume={66},
  number={5},
  pages={688},
  year={1974},
  publisher={American Psychological Association}
}

@misc{angrist1995identification,
  title={Identification and estimation of local average treatment effects},
  author={Angrist, Joshua and Imbens, Guido},
  year={1995},
  publisher={National Bureau of Economic Research Cambridge, Mass., USA}
}

@article{rainforth2024modern,
  title={Modern Bayesian experimental design},
  author={Rainforth, Tom and Foster, Adam and Ivanova, Desi R and Bickford Smith, Freddie},
  journal={Statistical Science},
  volume={39},
  number={1},
  pages={100--114},
  year={2024},
  publisher={Institute of Mathematical Statistics}
}

@article{lindley1956measure,
  title={On a measure of the information provided by an experiment},
  author={Lindley, Dennis V},
  journal={The Annals of Mathematical Statistics},
  volume={27},
  number={4},
  pages={986--1005},
  year={1956},
  publisher={Institute of Mathematical Statistics}
}

@article{chaloner1995bayesian,
  title={Bayesian experimental design: A review},
  author={Chaloner, Kathryn and Verdinelli, Isabella},
  journal={Statistical science},
  pages={273--304},
  year={1995},
  publisher={JSTOR}
}

@inproceedings{ru2020bayesian,
  title={Bayesian optimisation over multiple continuous and categorical inputs},
  author={Ru, Binxin and Alvi, Ahsan and Nguyen, Vu and Osborne, Michael A and Roberts, Stephen},
  booktitle={International Conference on Machine Learning},
  pages={8276--8285},
  year={2020},
  organization={PMLR}
}

@inproceedings{gonzalez_Preferential_2017,
  title = {Preferential {{Bayesian}} Optimization},
  author = {Gonz{\'a}lez, Javier and Dai, Zhenwen and Damianou, Andreas and Lawrence, Neil D.},
  booktitle = {Proceedings of the 34th International Conference on Machine Learning},
  series = {Proceedings of Machine Learning Research},
  volume = {70},
  pages = {1282--1291},
  year = {2017}
}

@inproceedings{jesson_CausalBALD_2021,
  title = {Causal-{{BALD}}: Deep {{Bayesian}} Active Learning of Outcomes to Infer Treatment-Effects from Observational Data},
  author = {Jesson, Andrew and Tigas, Panagiotis and {van Amersfoort}, Joost and Kirsch, Andreas and Shalit, Uri and Gal, Yarin},
  booktitle = {Advances in Neural Information Processing Systems},
  volume = {34},
  pages = {30465--30478},
  year = {2021}
}
